\documentclass[11pt]{article}

\usepackage[margin=1in]{geometry}
\usepackage[T1]{fontenc}
\usepackage[utf8]{inputenc}
\usepackage{lmodern}
\usepackage[round,authoryear]{natbib}
\usepackage{authblk}

\usepackage{hyperref}
\hypersetup{
  pdftitle={Persistent Cross Entropy},
  pdfauthor={Sijin Yeom and Jae-Hun Jung},
  colorlinks=true,
  linkcolor=blue,
  citecolor=blue,
  urlcolor=blue
}
\usepackage{url}
\usepackage{amsmath,amssymb,mathtools}
\usepackage{amsthm}
\usepackage{bm}
\usepackage{booktabs}
\usepackage{enumitem}
\usepackage{graphicx}
\usepackage{placeins}

\newtheorem{theorem}{Theorem}
\newtheorem{proposition}{Proposition}

\theoremstyle{definition}
\newtheorem{definition}{Definition}
\theoremstyle{plain}
\newtheorem{lemma}{Lemma}

\theoremstyle{remark}

\title{Persistent Cross Entropy}

\author[1]{Sijin Yeom}
\author[1,2]{Jae-Hun Jung}
\affil[1]{Artificial Intelligence Graduate School,
Pohang University of Science and Technology, Pohang, Korea}
\affil[2]{Department of Mathematics,
Pohang University of Science and Technology, Pohang, Korea}
\date{}

\begin{document}

\maketitle

\begin{abstract}
Persistent entropy is the Shannon entropy of a persistence-based probability
measure defined on a persistence diagram. However, its cross-entropy version
is not naturally defined because two persistence diagrams generally have
different event spaces. To bridge these event spaces, we combine a similarity
function with persistence weighting to define an induced probability. The
induced probability reflects information from one diagram on the event space
of the other diagram and assigns unexplained probability mass to the unexplained event.
Using the induced probability, we extend cross entropy to persistence diagrams, called persistent cross entropy
(PCE). We establish the main properties of both the induced probability and
PCE and prove stability theorems for both. Through three numerical studies, we
show that PCE distinguishes diagrams with the same persistent entropy,
separates causal directions in dynamical systems without constructing a joint
persistent diagram, and can be used as a directional topology loss for knowledge
distillation.
\end{abstract}

% ============================================================
\section{Introduction}
\label{sec:introduction}
% ============================================================
Persistent homology extracts multiscale topological features from data by tracking how connected components, cycles, and higher-dimensional holes appear and disappear across a filtration~\citep{edelsbrunner2002topological}. It represents their birth--death structure through persistence diagrams, which provide compact descriptors of the topological structure of complex and high-dimensional data~\citep{carlsson2009topology}. An early landmark application of TDA revealed three clinically distinct subgroups of type 2 diabetes~\citep{li2015identification}. More recently, representations of persistence diagrams have enabled the prediction of molecular interaction energies~\citep{townsend2020representation} and the identification of spatial immune-cell patterns in tumor microenvironments~\citep{vipond2021multiparameter}.

Persistence diagrams are multisets of birth--death pairs in $\mathbb{R}^2$ that may contain different numbers of points and whose points have no canonical ordering, making them difficult to use directly in standard statistical and machine-learning pipelines. Consequently, various representations have been developed to encode persistence information as functions or finite-dimensional vectors. These include functional summaries such as Betti curves~\citep{johnson2021instability}, persistence landscapes~\citep{bubenik2015landscapes}, and persistence silhouettes~\citep{chazal2015silhouettes}, as well as image-based summaries such as persistence images~\citep{adams2017persistenceimages}.

Among scalar summaries of persistence diagrams, persistent entropy normalizes feature persistences to a probability mass
function and summarizes it by Shannon entropy~\citep{chintakunta2015entropy}.
Its stability and related entropy-based methods have also been
studied~\citep{atienza2020stability}. A recent preprint studies its
use in phase-transition detection~\citep{rucco2026persistententropy}.

Shannon cross entropy compares two probability measures on a common event
space, but $p_X$ and $p_Y$ are defined on different persistent diagrams $D_X$ and $D_Y$, respectively. Thus the
\emph{naive} expression
$$H(p_X,p_Y)=-\sum\limits_{u\in D_X}p_X(u)\log p_Y(u)$$
cannot be applied directly. To deal with this issue, we construct the \emph{induced probability}
$p_X^Y$ on
$D_X\cup\{\partial\}$, where $\partial$ is the \emph{unexplained event}. For $u\in D_X$, the mass retained at $u$
depends on how similarly $D_X$ and $D_Y$ explain $u$; the
remaining mass is assigned to $\partial$. Viewing
$p_X$ on the same space with $p_X(\partial)=0$, we define
$$
H(p_X,p_X^Y)
=
-\sum\limits_{u\in D_X}p_X(u)\log p_X^Y(u).
$$

We prove that $p_X^Y=p_X$ exactly when every local difference vanishes and
that $p_X^Y(\partial)$ is the total variation distance between $p_X^Y$ and
$p_X$. The cross entropy excess is both
$\operatorname{KL}(p_X \Vert p_X^Y)$ and an expected squared difference under the chosen
similarity function. We also show that the unexplained mass and the entropy
excess vanish as $D_Y$ approaches $D_X$. For the entropy excess, we further
establish joint local stability when both diagrams vary.
Section~\ref{sec:background} provides the required background,
Section~\ref{sec:induced-probabilities} constructs the induced probability and
proves its main properties, and Section~\ref{sec:persistent-cross-entropy}
develops persistent cross entropy and its stability theory. Finally,
Section~\ref{sec:experiments} evaluates PCE through examples with equal
persistent entropy, causality analysis in dynamical systems, and topology-aware
knowledge distillation.

% ============================================================
\section{Background and Setup}
\label{sec:background}
% ============================================================
\subsection{Persistence Diagrams and Coordinates}

Given a finite point cloud $X$ in a metric space, persistent homology constructs
a multiscale topological summary by building a filtration of simplicial
complexes~\citep{edelsbrunner2002topological,carlsson2009topology}
$$
K_{\varepsilon_1}(X)\subseteq K_{\varepsilon_2}(X)\subseteq \cdots .
$$
In a fixed homological dimension, each feature is recorded by its birth scale
$b$ and death scale $d$. The resulting finite off-diagonal multiset of points
$(b,d)$ records the persistence diagram, and $d-b$ is the persistence of the
feature.

Throughout the paper, we use birth--persistence coordinates on the ambient
space $\mathcal{Z}=\mathbb{R}^2$ and write
$$
z=(b,\ell),
\qquad
\ell=d-b.
$$
For any persistence point $v$, we write $v=(b_v,\ell_v)$.
The death coordinate is recovered as $d=b+\ell$, so this change of
coordinates loses no information. Separating birth from persistence makes
explicit both when a feature appears and how long it lasts, while mapping the
diagonal to the horizontal axis $\ell=0$. This representation is also used in
persistence images, template functions, and vectorized persistence
blocks~\citep{adams2017persistenceimages,perea2023template,
chan2022vectorized}. It makes persistence weighting and matching to the
diagonal simpler. We use $\Vert \cdot \Vert$ for the Euclidean norm in these
coordinates.

For diagram distances, we follow the standard convention of adjoining the
diagonal with infinite multiplicity~\citep{cohen2007stability}. In our measure
construction, however, $D$ denotes only the finite off-diagonal multiset. The
diagonal is used only for matching in Wasserstein or bottleneck distances. The
precise definition of the $1$-Wasserstein distance is given in
Appendix~\ref{app:w1-definition}.

\subsection{Persistence Probabilities and Entropy}

We now assign a probability to the finite off-diagonal points of a persistence
diagram. Let $D$ be a nonempty finite persistence diagram, with all sums over
$D$ understood with multiplicity. We associate each $z=(b,\ell)\in D$ with
the persistence probability
$$
p_D(z)
=
\frac{\ell}{\sum\limits_{v\in D}\ell_v}.
$$
When a point is repeated, each occurrence is treated as a separate probability
atom. We suppress occurrence labels in the main text and make them explicit in
Appendix~\ref{app:proof_induced_probability_measure}.
For $D_X$, we write $p_X:=p_{D_X}$. Every off-diagonal point has positive
persistence, so $p_X(u)>0$ for all $u\in D_X$.
This probability assigns more mass to long-lived features and less to
short-lived features, which may represent topological noise
\citep{atienza2019persistent}.
Related constructions encode persistence through landscape heights and
through explicit weights in silhouettes and persistence
images~\citep{bubenik2015landscapes,chazal2015silhouettes,
adams2017persistenceimages}.

Persistent entropy is the Shannon entropy of $p_D$~\citep{chintakunta2015entropy}:
$$
H(p_D)
=
-\sum\limits_{z\in D}p_D(z)\log p_D(z).
$$
Unlike an empirical distribution, $p_D$ is determined by persistence rather
than sampling frequency. Its entropy is small when persistence is concentrated
on a few features and large when it is spread evenly; for $n$ equal weights,
the maximum is $H(p_D)=\log n$.

% ============================================================
\section{Induced Probabilities}
\label{sec:induced-probabilities}
% ============================================================
Because $D_X$ and $D_Y$ generally have different event spaces, $p_X$ and
$p_Y$ cannot be used directly in Shannon cross entropy. To connect the two
event spaces, we combine a similarity function with persistence weighting to
measure how well each diagram explains the points of $D_X$. We then add the
unexplained event $\partial$ to carry the remaining mass, which gives the
probability on $D_X\cup\{\partial\}$ induced by $D_Y$.
Section~\ref{subsec:similarity-function} introduces the
similarity function, Section~\ref{subsec:induced-probability-measure}
constructs the induced probability, and
Section~\ref{subsec:properties-induced-probabilities} establishes its main
properties and stability.

\subsection{Similarity Between Persistence Points}
\label{subsec:similarity-function}
We first define how similarity is measured between two persistence points.

\begin{definition}
\label{def:similarity_function}
A map $k:\mathcal{Z}\times\mathcal{Z}\to(0,1]$ is called a
\textbf{similarity function} if it satisfies:
\begin{enumerate}[label=(\textit{\roman*})]
\item there exist a metric $\rho$ on $\mathcal{Z}$ and a strictly decreasing function
$h:[0,\infty)\to(0,1]$ with $h(0)=1$ and
$\lim\limits_{r\to\infty}h(r)=0$ such that
$$
k(u,v)=h(\rho(u,v));
$$

\item $k$ is Lipschitz in its second argument: there exists $L_0>0$ such that
$$
|k(u,v)-k(u,v')|
\leq
L_0\Vert v-v' \Vert
$$
for all $u,v,v'\in\mathcal{Z}$.
\end{enumerate}
\end{definition}
The birth and persistence coordinates can have different numerical ranges and
roles. We therefore use fixed global scales $\sigma_b,\sigma_\ell>0$, shared
by all points and diagrams. For $u=(b_u,\ell_u)$ and
$v=(b_v,\ell_v)$, a useful choice is the Gaussian similarity
function
$$
\kappa(u,v)
=
e^{
-\frac{1}{2}
\left\{
\left(\frac{b_v-b_u}{\sigma_b}\right)^2
+
\left(\frac{\ell_v-\ell_u}{\sigma_\ell}\right)^2
\right\}
}.
$$
It satisfies Definition~\ref{def:similarity_function}. Moreover, it can also be obtained
from the maximum-entropy density with prescribed coordinate-wise variances
\citep{jaynes1957information,cover2006elements}. The derivation and Lipschitz bound are given in
Appendix~\ref{app:gaussian-similarity}.

\subsection{Probability Construction}
\label{subsec:induced-probability-measure}

The similarity function measures proximity, but persistence must also be
weighted. Using $\ell$ directly allows points with very large persistence to
contribute without bound, so a uniform stability bound is unavailable unless
persistence is bounded in advance. Throughout the paper, we fix a bounded
increasing function $g:[0,\infty)\to[0,\infty)$ with $g(0)=0$, such as
$g(t)=\frac{t}{1+t}$. This allows points with greater persistence to receive
larger weights while keeping their contributions bounded. The regularity
conditions on $g$ and the definition of $L_\phi$ are given in
Appendix~\ref{app:g-regularity}.

\begin{definition}
\label{def:induced_weight}
Let $D_X,D_Y$ be nonempty finite persistence diagrams in $\mathcal{Z}$, and let
$k$ be a similarity function. Fix a global response scale $\tau>0$. For
$u\in D_X$, $v\in D_X\cup D_Y$, and $D\in\{D_X,D_Y\}$, we define the
following:
\begin{enumerate}[label=(\textit{\roman*}),leftmargin=2em,itemsep=0.2em,topsep=0.4em]
\item $\phi_u(v)=k(u,v)g(\ell_v)$, a score for how well $v$ explains $u$.

\item $\alpha_u(D)=\sum\limits_{v\in D}\phi_u(v)$, a score for how well
$D$ explains $u$.

\item $\delta_X^Y(u)=\alpha_u(D_Y)-\alpha_u(D_X)$, the difference in explanatory power at $u$. \footnote{The value
$\delta_X^Y(u)$ is signed, so one might try to retain its sign in the
induced probability. However, because a probability function must be
nonnegative and sum to one, it is difficult to incorporate signed information
naturally. We therefore focus on the absolute size of $\delta_X^Y(u)$ and
assign the weight accordingly.}

\item $w_X^Y(u)=e^{-\frac{\delta_X^Y(u)^2}{2\tau^2}}$, a weight in
$(0,1]$ that increases as $\lvert\delta_X^Y(u)\rvert$ decreases.
\end{enumerate}
\end{definition}
After any stated preprocessing, the coordinate scales in $k$ and the response
scale $\tau$ are fixed across all comparisons rather than recalibrated for
individual diagram pairs.

The product $w_X^Y(u)p_X(u)$ is the mass at $u$ explained by $D_Y$, but
these masses generally sum to less than one. Equality holds only when
$w_X^Y(u)=1$ for every $u\in D_X$. We therefore introduce an unexplained
event to carry the remaining mass.
\begin{definition}
\label{def:induced_probability_measure}
Let $\partial\notin D_X$ denote the \textbf{unexplained event}. The
\textbf{induced probability} on $D_X\cup\{\partial\}$ is
$$
p_X^Y(u)
=
\begin{cases}
w_X^Y(u)p_X(u),
& u\in D_X,\\[0.15cm]
\displaystyle
\sum\limits_{v\in D_X}\bigl(1-w_X^Y(v)\bigr)p_X(v),
& u=\partial.
\end{cases}
$$
One can regard $p_X$ as a probability measure on the same space
$D_X\cup\{\partial\}$ by setting $p_X(\partial)=0$.
Appendix~\ref{app:proof_induced_probability_measure} verifies that $p_X^Y$
is indeed a probability measure.
\end{definition}

For each $u\in D_X$, the quantity $1-w_X^Y(u)$ determines the fraction of
the mass at $u$ assigned to the unexplained event. Consequently,
$(1-w_X^Y(u))p_X(u)$ is the amount of unexplained probability mass from
$u$. The value $p_X^Y(\partial)$, obtained by summing these amounts over
$D_X$, is therefore the total unexplained mass.

\subsection{Properties of Induced Probabilities}
\label{subsec:properties-induced-probabilities}

Since $p_X^Y$ encodes information from $D_Y$ on the event space of
$D_X \cup \{\partial\}$, the self-induced probability $p_X^X$ should be equal to $p_X$. We
first verify this consistency and then study how the induced probability
changes as the two diagrams become closer, using total variation and the
$1$-Wasserstein distance.

\begin{proposition}
\label{proposition:self-recovery}
The induced probability satisfies the following properties.
\begin{enumerate}[label=(\textit{\roman*}),leftmargin=2em,itemsep=0.4em]
\item Comparing $D_X$ with itself recovers the original probability:
$$
p_X^X=p_X.
$$

\item The induced probability equals $p_X$ exactly when the difference in explanatory power
vanishes on $D_X$:
$$
p_X^Y=p_X
\quad\Longleftrightarrow\quad
\delta_X^Y=0\quad\textnormal{on}\quad D_X.
$$
\item The total variation distance from $p_X$ is exactly the unexplained
mass:
$$
d_{\mathrm{TV}}(p_X^Y,p_X)=p_X^Y(\partial).
$$
\end{enumerate}
\end{proposition}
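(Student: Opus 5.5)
All three parts follow directly from the definitions. The one fact used repeatedly is that $w_X^Y(u)=e^{-\delta_X^Y(u)^2/(2\tau^2)}$ lies in $(0,1]$ and equals $1$ if and only if $\delta_X^Y(u)=0$. We prove (ii) first and then read off (i) and (iii).

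For (i), take $D_Y=D_X$. Then $\delta_X^X(u)=\alpha_u(D_X)-\alpha_u(D_X)=0$ for every $u\in D_X$, so (i) is the special case of (ii). It can also be checked directly: $w_X^X\equiv 1$, so $p_X^X(u)=p_X(u)$ on $D_X$, and $p_X^X(\partial)=\sum_v(1-1)p_X(v)=0=p_X(\partial)$. Repeated points should be handled occurrence by occurrence, as in Appendix~\ref{app:proof_induced_probability_measure}. Since $\alpha_u(D)$ is a sum with multiplicity, its value does not depend on the labelling, so this causes no difficulty.

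For (ii), we treat the two directions separately.
\begin{itemize}
\item If $\delta_X^Y=0$ on $D_X$, then $w_X^Y\equiv1$, and the computation above gives $p_X^Y=p_X$ on all of $D_X\cup\{\partial\}$.
\item If $p_X^Y=p_X$, then for each $u\in D_X$ we have $w_X^Y(u)p_X(u)=p_X(u)$. Since $p_X(u)>0$ for every off-diagonal point, this forces $w_X^Y(u)=1$, and hence $\delta_X^Y(u)=0$ by the fact above.
\end{itemize}
Strict positivity of $p_X$ is the only place where the hypothesis that $D$ consists of off-diagonal points enters.

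For (iii), write the total variation distance on the finite space $D_X\cup\{\partial\}$ as
\[
d_{\mathrm{TV}}(p_X^Y,p_X)=\tfrac12\sum_{x}\bigl|p_X^Y(x)-p_X(x)\bigr|.
\]
On $D_X$ each term equals $(1-w_X^Y(u))p_X(u)\ge0$, and these sum to $p_X^Y(\partial)$. The term at $\partial$ equals $|p_X^Y(\partial)-0|=p_X^Y(\partial)$. Adding the two contributions and halving gives exactly $p_X^Y(\partial)$.

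Alternatively, use the characterization $d_{\mathrm{TV}}(p,q)=\sum_x(q(x)-p(x))_+$. Here the only point where $p_X^Y$ exceeds $p_X$ is $\partial$, so the distance is again $p_X^Y(\partial)$.

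The only subtlety is the convention for $d_{\mathrm{TV}}$. We use the version with the factor $\tfrac12$, equivalently the supremum over events; with that convention the identity is exact. The rest is routine bookkeeping.
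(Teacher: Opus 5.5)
Your proof is correct. Parts (i) and (ii) follow the paper's argument essentially verbatim. In both, $\delta_X^X\equiv0$ gives $w_X^X\equiv1$. The converse in (ii) cancels the strictly positive $p_X(\xi)$ to force $w_X^Y=1$ and hence $\delta_X^Y=0$.

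For (iii) you take a different, shorter route. You compute $\tfrac12\sum_x|p_X^Y(x)-p_X(x)|$, or equivalently $\sum_x(p_X^Y(x)-p_X(x))_+$, and observe that the $D_X$-terms together and the $\partial$-term each contribute $p_X^Y(\partial)$.

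The paper instead works directly from the definition $d_{\mathrm{TV}}(p,q)=\sup_{A\subseteq\Omega_X}|p(A)-q(A)|$ that it states. It introduces the removed mass $m(B)$ and shows that $|p_X^Y(A)-p_X(A)|$ equals $m(A)$ when $\partial\notin A$, and equals $p_X^Y(\partial)-m(B)$ when $A=B\cup\{\partial\}$. It bounds both by $p_X^Y(\partial)$ and notes that the bound is attained at $A=\{\partial\}$.

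Your version relies on the standard equivalence of the supremum and half-$\ell^1$ forms on a finite space. The paper itself invokes that equivalence later, in the proof of Theorem~\ref{thm:unexplained-mass-stability}. The paper's version is self-contained from its stated definition and exhibits the extremal event explicitly; mathematically the two are interchangeable.

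One small presentational point: (iii) does not rely on (ii), so the remark that you ``read off'' (iii) from (ii) is unnecessary.
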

The proof is given in Appendix~\ref{app:proof-self-recovery}.

We next use the $1$-Wasserstein distance to control the unexplained mass and
changes in the induced probability.

\begin{theorem}
\label{thm:unexplained-mass-stability}
Let $D_X, D_Y$ be nonempty finite persistence diagrams.
There exists $C_X>0$ so that the induced probability satisfies the following properties.
\begin{enumerate}[label=(\textit{\roman*}),leftmargin=2em,itemsep=0.4em]
\item The unexplained mass vanishes as $D_Y$ approaches $D_X$:
$$
p_X^Y(\partial)\leq
C_X W_1(D_X,D_Y)^2.
$$

\item For fixed $D_X$, the induced probability is stable under changes in
$D_Y$; for any nonempty finite persistence diagram $D_{Y'}$,
$$
d_{\mathrm{TV}}(p_X^Y,p_X^{Y'})
\leq
C_X W_1(D_Y,D_{Y'}).
$$
\end{enumerate}
\end{theorem}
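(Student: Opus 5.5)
The central estimate is a Lipschitz bound for the explanatory score $\alpha_u(\cdot)$ in the $1$-Wasserstein distance. Both parts reduce to it.

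\textbf{Step 1: Lipschitz bound for $\alpha_u$.} Fix $u\in D_X$ and two diagrams $D,D'$. Take an optimal matching $\gamma$ for $W_1(D,D')$, in which unmatched points are sent to the diagonal. Split $\alpha_u(D)-\alpha_u(D')$ over the pairs of $\gamma$. There are two kinds of pair.
\begin{itemize}
\item For an off-diagonal pair $(v,v')$, write
$$
\phi_u(v)-\phi_u(v')=\bigl(k(u,v)-k(u,v')\bigr)g(\ell_v)+k(u,v')\bigl(g(\ell_v)-g(\ell_{v'})\bigr).
$$
Because $g$ is bounded, $k\le 1$, $k$ is $L_0$-Lipschitz in its second argument, and $g$ is Lipschitz (Appendix~\ref{app:g-regularity}), this is at most $(L_0\|g\|_\infty+L_g)\|v-v'\|\le L_\phi\|v-v'\|$.
\item For a point $v$ matched to the diagonal, the contribution is $\phi_u(v)\le g(\ell_v)=g(\ell_v)-g(0)\le L_g\ell_v$. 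This is bounded by a constant times the distance of $v$ to its diagonal projection, since in birth--persistence coordinates that distance is comparable to $\ell_v$.
\end{itemize}
Summing over the pairs gives
$$
|\alpha_u(D)-\alpha_u(D')|\le L_\phi W_1(D,D'),
$$
after absorbing any norm-equivalence constant into $L_\phi$. This constant depends only on $k$ and $g$.

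\textbf{Step 2: part (i).} Since $\delta_X^Y(u)=\alpha_u(D_Y)-\alpha_u(D_X)$, Step 1 gives $|\delta_X^Y(u)|\le L_\phi W_1(D_X,D_Y)$. From $1-e^{-x}\le x$ we get
$$
1-w_X^Y(u)\le \frac{\delta_X^Y(u)^2}{2\tau^2}.
$$
Therefore
$$
p_X^Y(\partial)=\sum_{u\in D_X}(1-w_X^Y(u))p_X(u)\le \frac{L_\phi^2}{2\tau^2}W_1(D_X,D_Y)^2.
$$

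\textbf{Step 3: part (ii).} Both measures live on $D_X\cup\{\partial\}$. Each point of $D_X$ differs by $|w_X^Y(u)-w_X^{Y'}(u)|p_X(u)$. The $\partial$-atom differs by at most the sum of these same quantities. Hence
$$
d_{\mathrm{TV}}(p_X^Y,p_X^{Y'})\le \sum_{u\in D_X}|w_X^Y(u)-w_X^{Y'}(u)|\,p_X(u).
$$
The map $x\mapsto e^{-x^2/(2\tau^2)}$ has derivative bounded by $\tau^{-1}e^{-1/2}$. Also
$$
\delta_X^Y(u)-\delta_X^{Y'}(u)=\alpha_u(D_Y)-\alpha_u(D_{Y'}),
$$
because the $\alpha_u(D_X)$ terms cancel. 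Applying Step 1 to this difference and summing against $p_X$ gives
$$
d_{\mathrm{TV}}(p_X^Y,p_X^{Y'})\le \frac{L_\phi}{\sqrt e\,\tau}W_1(D_Y,D_{Y'}).
$$
Taking $C_X$ to be the maximum of the two constants finishes the proof. These constants are in fact uniform in $D_X$, which is stronger than the existence of some $C_X$ that the statement asks for.

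\textbf{Main obstacle.} The delicate step is the diagonal-matched terms in Step 1. A feature that appears in only one diagram must still have a contribution to $\alpha_u$ controlled by its matching cost. This is exactly why $g(0)=0$ and the Lipschitz continuity of $g$ near $0$ are needed. I would first check that the diagonal-projection cost in the $W_1$ convention of Appendix~\ref{app:w1-definition}, measured in $(b,\ell)$ coordinates, dominates a fixed multiple of $\ell_v$. If it does not, I would adjust $L_\phi$ by the corresponding norm-equivalence factor.
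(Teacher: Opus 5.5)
Your proposal is correct and follows essentially the same route as the paper. Both use the matching-based bound $|\alpha_u(D)-\alpha_u(D')|\le L_\phi W_1(D,D')$, then $1-e^{-t}\le t$ for part (i), and for part (ii) the $\frac{1}{\tau\sqrt e}$-Lipschitz property of $a\mapsto e^{-a^2/(2\tau^2)}$, the cancellation of $\alpha_u(D_X)$, and $d_{\mathrm{TV}}\le\sum_u p_X(u)\,|w_X^Y(u)-w_X^{Y'}(u)|$, which give the same $D_X$-independent constants. Your norm-equivalence worry is unnecessary: in birth--persistence coordinates every diagonal point $(t,0)$ lies at Euclidean distance at least $\ell_v$ from $v$, so diagonal-matched terms are bounded by $L_g\|v-\gamma(v)\|$ with no extra factor (the paper handles them by noting that $\phi_u$ vanishes on the diagonal since $g(0)=0$, then applying the same pairwise bound).
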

The proof is given in
Appendix~\ref{app:proof-unexplained-mass-stability}.

% ============================================================
\section{Persistent Cross Entropy}
\label{sec:persistent-cross-entropy}
% ============================================================
The probabilities $p_X$ and $p_X^Y$ are defined on the same space
$D_X\cup\{\partial\}$. Since $p_X(\partial)=0$, the unexplained event does not
contribute to their cross entropy.
\begin{definition}
\label{def:persistent_cross_entropy}
The \textbf{persistent cross entropy} of $D_X$ relative to $D_Y$ is
$$
H(p_X,p_X^Y)
=
-\sum\limits_{u\in D_X}p_X(u)\log p_X^Y(u).
$$
Its \textbf{entropy excess} is
$\Delta H_X^Y=H(p_X,p_X^Y)-H(p_X)$.
\end{definition}

The entropy excess measures differences in how $D_X$ and $D_Y$ explain the
points of $D_X$ under the chosen similarity function and persistence weighting.
It is directional because the reverse comparison uses $p_Y$ on
$D_Y\cup\{\partial\}$; hence
$H(p_X,p_X^Y)\neq H(p_Y,p_Y^X)$ in general.

\begin{proposition}
\label{proposition:cross_entropy_kl_decomposition}
For every pair of nonempty finite persistence diagrams,
$$
\Delta H_X^Y
=
H(p_X,p_X^Y)-H(p_X)
=
\operatorname{KL}(p_X \Vert p_X^Y)
=
\frac{1}{2\tau^2}\mathbb{E}_{u\sim p_X}\!\left[\delta_X^Y(u)^2\right]
\geq 0.
$$
\end{proposition}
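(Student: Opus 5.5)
The argument is a direct computation from the definitions, carried out as three equalities in sequence. The key observation is that $\log p_X^Y(u)=\log w_X^Y(u)+\log p_X(u)$ for every $u\in D_X$. This is legitimate because $p_X(u)>0$ (every off-diagonal point has positive persistence) and $w_X^Y(u)\in(0,1]$, so every logarithm is finite. In particular, the persistent cross entropy is always a finite real number, even though $p_X^Y$ may place mass on $\partial$.

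First I will expand $H(p_X,p_X^Y)$ using this splitting:
$$
H(p_X,p_X^Y)=-\sum_{u\in D_X}p_X(u)\log p_X(u)-\sum_{u\in D_X}p_X(u)\log w_X^Y(u),
$$
which gives $\Delta H_X^Y=-\sum_{u\in D_X}p_X(u)\log w_X^Y(u)$.

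Second, for the KL identity, I regard both measures as living on $D_X\cup\{\partial\}$, as in Definition~\ref{def:induced_probability_measure}. Then
$$
\operatorname{KL}(p_X\Vert p_X^Y)=\sum_{u\in D_X\cup\{\partial\}}p_X(u)\log\frac{p_X(u)}{p_X^Y(u)}.
$$
The $\partial$ term vanishes under the usual convention $0\log(0/q)=0$, and this holds even when $p_X^Y(\partial)>0$. Absolute continuity $p_X\ll p_X^Y$ holds because $p_X^Y(u)>0$ wherever $p_X(u)>0$. The remaining sum is $\sum_{u\in D_X}p_X(u)\log\bigl(1/w_X^Y(u)\bigr)$, which matches the expression for $\Delta H_X^Y$ from the first step.

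Third, substituting $\log w_X^Y(u)=-\delta_X^Y(u)^2/(2\tau^2)$ from Definition~\ref{def:induced_weight} gives
$$
\Delta H_X^Y=\frac{1}{2\tau^2}\sum_{u\in D_X}p_X(u)\,\delta_X^Y(u)^2=\frac{1}{2\tau^2}\mathbb{E}_{u\sim p_X}\bigl[\delta_X^Y(u)^2\bigr].
$$
Nonnegativity then follows immediately, since this is an expectation of squares. It can also be seen from Gibbs' inequality applied to the KL form.

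The only delicate point is bookkeeping rather than analysis. Repeated points must be treated as separate atoms, following the occurrence labels of Appendix~\ref{app:proof_induced_probability_measure}, so that the sums are taken with multiplicity. The event $\partial$ must also be handled correctly in the KL sum, so that the unexplained mass contributes nothing.
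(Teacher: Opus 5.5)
Your proposal is correct and follows essentially the same route as the paper's proof. You both use the identity $p_X(u)/p_X^Y(u)=1/w_X^Y(u)=e^{\delta_X^Y(u)^2/(2\tau^2)}$ on each occurrence atom, drop the $\partial$ term of the KL sum via $0\log(0/q)=0$, and expand the logarithm to identify the KL divergence with $H(p_X,p_X^Y)-H(p_X)$; the only difference is that you carry out the steps in a different order.
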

The proof is given in Appendix~\ref{app:proof-cross-entropy-kl}.

Together with Proposition~\ref{proposition:self-recovery}, this identity shows
that the entropy excess vanishes exactly when
$p_X^Y=p_X$; in particular,
$H(p_X,p_X^X)=H(p_X)$.

We next show that the entropy excess vanishes as $D_Y$ approaches $D_X$ in
the $1$-Wasserstein distance.

\begin{theorem}
\label{thm:persistent-cross-entropy-wasserstein-bound}
There exists a single constant $C>0$ such that, for all nonempty finite
persistence diagrams $D_X,D_Y$,
$$
0
\leq
\Delta H_X^Y
\leq
CW_1(D_X,D_Y)^2.
$$
The constant $C$ depends only on the similarity function $k$ in
Definition~\ref{def:similarity_function} and on $g$ and $\tau$ fixed in
Section~\ref{subsec:induced-probability-measure}.
\end{theorem}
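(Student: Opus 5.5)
The starting point is Proposition~\ref{proposition:cross_entropy_kl_decomposition}, which already gives the lower bound and the identity
$$
\Delta H_X^Y=\frac{1}{2\tau^2}\mathbb{E}_{u\sim p_X}\!\left[\delta_X^Y(u)^2\right].
$$
It therefore suffices to find a constant $C'$, depending only on $k$ and $g$, such that
$$
|\delta_X^Y(u)|\leq C' W_1(D_X,D_Y)\qquad\text{for every } u\in D_X .
$$
Since $p_X$ is a probability, this gives $\Delta H_X^Y\leq \frac{C'^2}{2\tau^2}W_1(D_X,D_Y)^2$, with a constant independent of the diagrams. The uniformity in $u$ is exactly what removes any dependence on $D_X$, in contrast with the constant $C_X$ of Theorem~\ref{thm:unexplained-mass-stability}.

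To bound $\delta_X^Y(u)=\alpha_u(D_Y)-\alpha_u(D_X)$, I would fix an optimal (or near-optimal) matching $\gamma$ realizing $W_1(D_X,D_Y)$, with unmatched points sent to the diagonal. The difference then splits into three kinds of terms.

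\textbf{Matched pairs $(x,y)$.} Write
$$
\phi_u(y)-\phi_u(x)=\bigl(k(u,y)-k(u,x)\bigr)g(\ell_y)+k(u,x)\bigl(g(\ell_y)-g(\ell_x)\bigr).
$$
The first term is at most $\|g\|_\infty L_0\|y-x\|$ by the Lipschitz property (ii) of Definition~\ref{def:similarity_function}. The second is at most $L_g|\ell_y-\ell_x|\leq L_g\|y-x\|$, using $k\leq 1$ and the Lipschitz regularity of $g$ from Appendix~\ref{app:g-regularity}. In total, $|\phi_u(y)-\phi_u(x)|\leq L_\phi\|y-x\|$.

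\textbf{Points matched to the diagonal.} A point $v$ sent to the diagonal contributes $\phi_u(v)=k(u,v)g(\ell_v)\leq g(\ell_v)=g(\ell_v)-g(0)\leq L_g\ell_v$. Its matching cost is at least a fixed multiple $c\,\ell_v$; here $c$ depends on the chosen norm and diagonal projection (in birth--persistence coordinates the distance to the line $\ell=0$ is $\ell_v$). Hence its contribution is at most $(L_g/c)$ times its cost.

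Summing the three kinds of terms gives $|\delta_X^Y(u)|\leq C'\sum_{\text{pairs}}\mathrm{cost}=C' W_1(D_X,D_Y)$, with $C'=\max(L_\phi, L_g/c)$ independent of $u$ and of the diagrams. If the infimum is not attained, one passes to a limit over near-optimal matchings; for finite diagrams it is attained anyway.

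The main obstacle is aligning the conventions: the ground metric and diagonal projection used in the definition of $W_1$ in Appendix~\ref{app:w1-definition} must match the Euclidean norm in $(b,\ell)$ coordinates and the Lipschitz constants from Appendix~\ref{app:g-regularity}. In particular, the constant $c$ must be shown to depend only on these fixed conventions. This is where the boundedness and Lipschitz regularity of $g$, rather than of $\ell$ itself, are essential: they give a global Lipschitz constant for $v\mapsto\phi_u(v)$ that is uniform in $u$, and they make diagonal-matched points cost-controlled.
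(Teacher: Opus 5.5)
Your proposal is correct and follows essentially the same route as the paper: the identity $\Delta H_X^Y=\frac{1}{2\tau^2}\mathbb{E}_{u\sim p_X}[\delta_X^Y(u)^2]$ combined with the uniform bound $\Vert\delta_X^Y\Vert_\infty\le L_\phi W_1(D_X,D_Y)$, obtained by summing $|\phi_u(v)-\phi_u(\gamma(v))|\le L_\phi\Vert v-\gamma(v)\Vert$ over a matching (Lemmas~\ref{lem:alpha-wp-stability} and~\ref{lem:delta-wp-stability}), which gives $C=L_\phi^2/(2\tau^2)$. The differences are cosmetic. The paper handles diagonal-matched points inside the same Lipschitz estimate: there $\ell=0$ and $g(0)=0$, and the Euclidean distance to $\{\ell=0\}$ is at least $\ell_v$, so your $c$ equals $1$ and $C'=L_\phi$. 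It also bounds the sum for every matching before taking the infimum, instead of invoking an optimal matching.
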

The proof is given in Appendix~\ref{app:proof-pce-wasserstein-bound}.

We finally establish joint local stability when both diagrams vary.

\begin{theorem}
\label{thm:pce-joint-local-stability}
Fix nonempty finite persistence diagrams $D_X,D_Y$. The entropy excess is
locally stable in both diagrams: there exists a finite constant $C_{X,Y}$ such that, whenever
$D_{X'}$ and $D_{Y'}$ are sufficiently close to $D_X$ and $D_Y$, respectively,
$$
|\Delta H_X^Y-\Delta H_{X'}^{Y'}|
\leq
C_{X,Y}
\left(
W_1(D_X,D_{X'})+W_1(D_Y,D_{Y'})
\right).
$$
The same constant $C_{X,Y}$ applies throughout this neighborhood and does not
depend on $D_{X'}$ or $D_{Y'}$.
In particular, if $D_{X'}=D_X$, then
$$
|\Delta H_X^Y-\Delta H_X^{Y'}|
\leq
C_{X,Y}W_1(D_Y,D_{Y'}).
$$
The complete proof, including an explicit neighborhood and a choice of
$C_{X,Y}$, is given in
Appendix~\ref{app:proof-pce-joint-local-stability}.
\end{theorem}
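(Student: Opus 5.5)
By Proposition~\ref{proposition:cross_entropy_kl_decomposition} we have
$$
\Delta H_X^Y=\frac{1}{2\tau^2}\sum_{u\in D_X}p_X(u)\,\delta_X^Y(u)^2 ,
$$
so the argument reduces to perturbation estimates for the two ingredients $p_X$ and $\delta_X^Y$. For $D_{X'}$ near $D_X$ we fix an optimal $W_1$ matching $\gamma$ between $D_X$ and $D_{X'}$. We choose the neighborhood radius $r$ smaller than half the minimum persistence $\ell_{\min}$ of $D_X$. Then every point of $D_X$ is matched to an off-diagonal point $u'$ of $D_{X'}$ with $\|u-u'\|\le W_1<r$. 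The remaining points of $D_{X'}$ are matched to the diagonal, and their total persistence is at most $\sqrt2\,W_1$ or so, since the diagonal in birth--persistence coordinates is $\ell=0$.

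Next we expand the difference as
$$
\Delta H_X^Y-\Delta H_{X'}^{Y'}=\tfrac{1}{2\tau^2}\Bigl[\textstyle\sum_{u\in D_X}p_X(u)\delta_X^Y(u)^2-\sum_{u'\in D_{X'}}p_{X'}(u')\delta_{X'}^{Y'}(u')^2\Bigr]
$$
and split it into three parts: matched pairs $(u,u')$, unmatched points of $D_{X'}$, and within the matched sum a telescoping of $p_X(u)-p_{X'}(u')$ and of $\delta^2$. The bounded quantity driving everything is $|\delta|\le 2\,\|g\|_\infty N$, where $N$ bounds the cardinalities. Near $D_X$ the cardinality of $D_{X'}$ is not bounded, so we do not bound $\delta$ this way. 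Instead we use $\alpha_u(D)\le \sum_{v\in D} g(\ell_v)\le L_g\sum_v \ell_v$ together with boundedness of $g$ on the matched part. This gives $\alpha_{u'}(D_{X'})\le \alpha$-type constants plus $L_g\sqrt2\,W_1$ from the near-diagonal points. Here $L_g$ is the Lipschitz constant of $g$ from Appendix~\ref{app:g-regularity}, and we assume $g$ is Lipschitz with $g(0)=0$. Consequently $\delta$ stays uniformly bounded by some $M_{X,Y}$ on the neighborhood.

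We then carry out the three estimates in order.

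\textbf{(a) Lipschitz control of $\alpha$.} For a fixed $u$, moving a point $v$ to $v'$ changes $\phi_u(v)$ by at most $L_0\|v-v'\|+L_g|\ell_v-\ell_{v'}|\le L_\phi\|v-v'\|$, using Definition~\ref{def:similarity_function}(ii) and $k\le1$. An unmatched point contributes at most $g(\ell_v)\le L_g\ell_v$. Summing over a matching gives $|\alpha_u(D)-\alpha_u(D')|\le L_\phi' W_1(D,D')$. Changing the base point $u\to u'$ requires Lipschitz continuity of $k$ in its first argument, which follows by symmetry of $\rho$. It gives $|\alpha_u(D)-\alpha_{u'}(D)|\le L_0\|u-u'\|\sum_{v\in D}g(\ell_v)$, and the last sum is locally bounded. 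Hence $|\delta_X^Y(u)-\delta_{X'}^{Y'}(u')|\le c\,(W_1(D_X,D_{X'})+W_1(D_Y,D_{Y'}))$, and also $|\delta^2-\delta'^2|\le 2M_{X,Y}\,c(\cdots)$.

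\textbf{(b) Lipschitz control of $p_X$.} Write $S=\sum_{D_X}\ell$ and $S'=\sum_{D_{X'}}\ell$, so $|S-S'|\le \sqrt2\,W_1$. Since $S'\ge S/2$ once $r$ is small, we get $\sum_{\text{matched}}|p_X(u)-p_{X'}(u')|+\sum_{\text{unmatched}}p_{X'}\le c'W_1(D_X,D_{X'})$, with $c'$ depending on $S$.

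\textbf{(c) Assembly.} Combining gives
$$
|\Delta H_X^Y-\Delta H_{X'}^{Y'}|\le \tfrac{1}{2\tau^2}\bigl[M_{X,Y}^2\,c'W_1(D_X,D_{X'})+2M_{X,Y}\,c\,(W_1(D_X,D_{X'})+W_1(D_Y,D_{Y'}))\bigr],
$$
which defines $C_{X,Y}$. The special case $D_{X'}=D_X$ follows immediately.

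The main obstacle is the uniformity over the neighborhood. The points of $D_{X'}$ and $D_{Y'}$ matched to the diagonal may be arbitrarily numerous, so every constant must be controlled through total persistence and the bound $g(t)\le L_g t$ rather than through cardinality. The same issue affects $\sum_{v\in D_{Y'}}g(\ell_v)$ in the base-point step. I would handle it by bounding this sum by $\sum_{D_Y}g(\ell_v)+L_g\sqrt2\,W_1(D_Y,D_{Y'})$ and by choosing the neighborhood radius at most $\min(1,\ell_{\min}/2)$ in both $W_1$ distances.
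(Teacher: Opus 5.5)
Your proposal is correct and follows essentially the paper's route. The shared steps are the expected-square form of $\Delta H_X^Y$, Lipschitz control of $\alpha_u$ in the diagram (via a matching and $g(0)=0$) and in the base point (via symmetry of $k$), matching-based control of the normalized persistence weights, and uniform bounds on $\delta$ through $\sum_{v}g(\ell_v)$ rather than cardinality.

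The differences are only bookkeeping. You shrink the radius below $\ell_{\min}/2$ so that no point of $D_X$ is matched to the diagonal; the paper instead allows such matches via $c_i=\min\{a_i,a_i'\}$ and uses radius $T(D_X)/2$. One slip to fix: in step (a) the similarity term should read $M_gL_0\|v-v'\|$, since it is this factor $M_g$ that yields $L_\phi$.
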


\section{Experiments}
\label{sec:experiments}

We evaluate PCE through three experiments.
Section~\ref{sec:two-loops-equal-pe} tests whether PCE can distinguish diagrams
with the same persistent entropy. Section~\ref{sec:spring-mass-benchmark}
studies causality in dynamical systems and compares PCE with
symmetric persistence-diagram methods that require a joint reconstruction.
Section~\ref{sec:topology-aware-kd} then applies PCE to knowledge distillation
and tests whether its teacher-to-student direction is useful for transferring
topological information. Full numerical settings are given in
Appendix~\ref{app:additional-experiments}.

\subsection{Two loops with equal persistent entropy}
\label{sec:two-loops-equal-pe}

Persistent entropy summarizes the probability distribution associated with a
single persistence diagram by one scalar. Consequently, diagrams with clearly
different feature compositions can have the same persistent entropy. To
illustrate this limitation, we construct five point clouds. The point cloud
$X$ contains a large loop, a small loop, and several noise components.
The point cloud $Y_1$ contains the two loops without the surrounding noise,
$Y_2$ retains only the large loop, $Y_3$ retains only the small loop, and
$Y_4$ contains only noise. Their parameters are adjusted so that the
persistent entropies of their $H_1$ persistence diagrams all equal $1.500$
when rounded to three decimal places.

Although all five persistent entropies round to $1.500$, PCE for
$Y_1$ through $Y_4$ equals $1.752$, $2.281$, $2.997$, and
$3.977$, respectively. Their unexplained masses are $0.194$, $0.498$,
$0.751$, and $0.770$. Thus PCE separates feature compositions that
persistent entropy cannot distinguish. Since $D_X$ is fixed in all four
comparisons, the PCE values share the same baseline $H(p_X)$, and their
differences are exactly the differences in entropy excess. The entropy excess
measures the average severity of the difference, whereas
$p_X^Y(\partial)$ is the fraction of the probability mass of $D_X$ left
unexplained. The $Y_3$ comparison illustrates why both parts of the augmented
probability are useful. Because $Y_3$ contains a small loop, the
corresponding point in $D_X$ receives the largest probability among the
points of $D_X$. Nevertheless, $Y_3$ leaves $0.751$ of the total mass at
$\partial$. Since the retained mass is not renormalized on $D_X$, the
small-loop point is not assigned an artificially inflated probability: it
remains the most strongly explained feature of $D_X$ while the large amount
of unexplained structure is preserved explicitly.

\begin{figure}[!t]
    \centering
    \includegraphics[width=\textwidth]{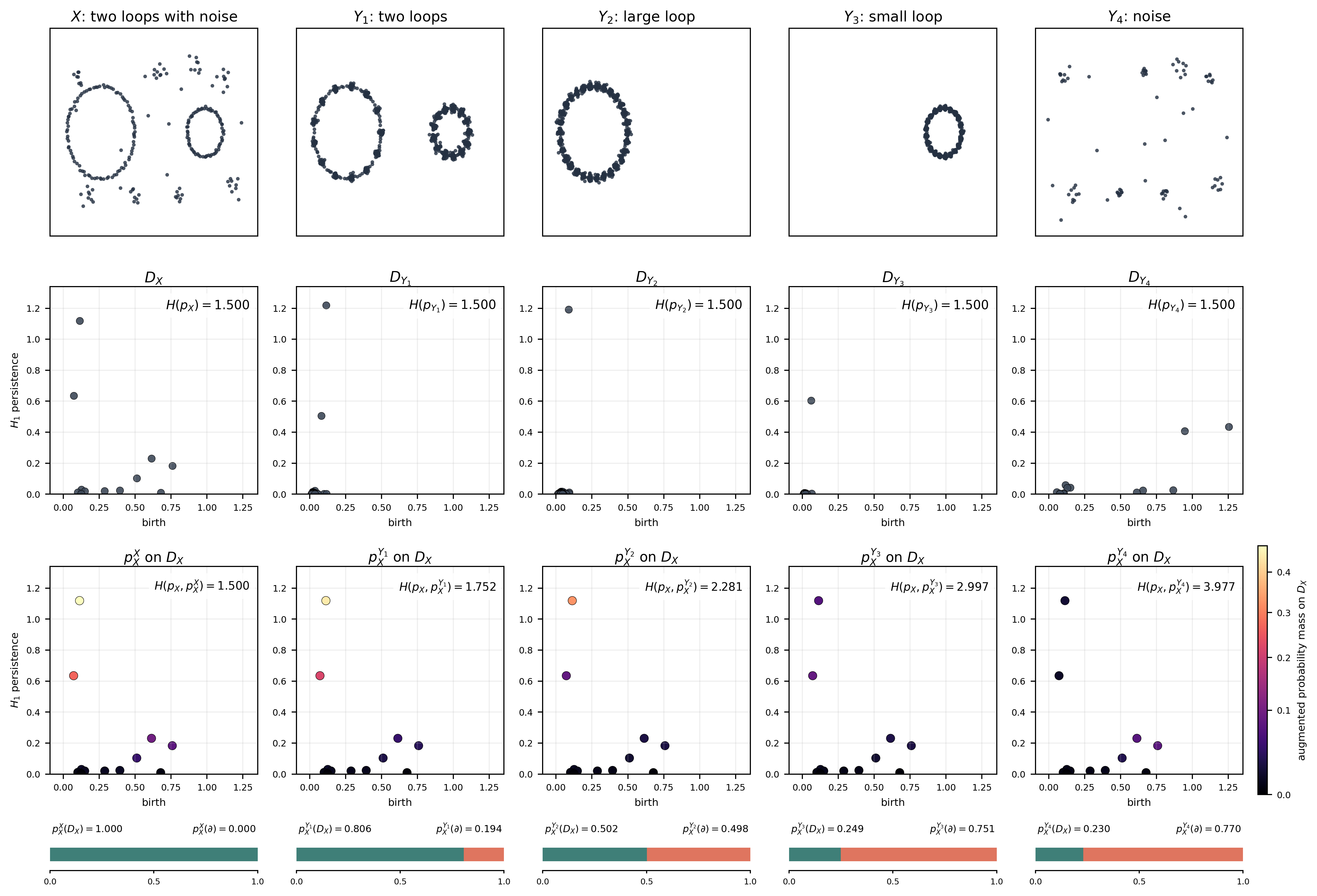}
    \caption{Two-loop experiment with equal persistent entropy. The first row
    shows the point clouds $X,Y_1,\ldots,Y_4$, and the second row shows
    $D_X,D_{Y_1},\ldots,D_{Y_4}$; the square and triangle in $D_X$ mark the
    large- and small-loop features, respectively, and circles denote the
    remaining noise features. Each diagram is annotated with its persistent
    entropy. The third row shows the actual augmented probability masses
    $p_X^X(u),p_X^{Y_1}(u),\ldots,p_X^{Y_4}(u)$ for $u\in D_X$ on a common
    color scale and reports the corresponding PCE. These masses are not
    renormalized within $D_X$. The bar below each panel completes the
    probability by partitioning unit mass between $D_X$ and the unexplained
    event $\partial$.}
    \label{fig:two-loops-equal-pe}
\end{figure}

\subsection{Spring--mass benchmark}
\label{sec:spring-mass-benchmark}

We study causality between deterministic dynamical systems using
the spring--mass benchmark of \citet{DBLP:journals/jsiaml/BandoKY22},
$$
\ddot{x}=-x+0.7\alpha y,
\qquad
\ddot{y}=-0.7y+\beta x,
$$
where $\alpha$ controls the influence $B\to A$ and $\beta$ controls
$A\to B$. The coupling grid contains independent, one-way, and
bidirectional regimes. Following Takens' embedding theorem, we turn each
scalar time series into a point cloud of delay vectors, such as
$X_A=\{(x_t,x_{t+q},\ldots,x_{t+(E-1)q})\}_t$, which reconstructs the
attractor of the observed dynamical system~\citep{takens1981detecting}. Let
$D_A$ and $D_B$ be the persistence diagrams of the two reconstructed
attractors. A symmetric distance $d(D_A,D_B)$ gives only one value and
therefore cannot determine a direction. Such a distance requires an
additional common object for directional comparison. Following the
multivariate delay reconstruction of \citet{cao1998dynamics},
\citet{DBLP:journals/jsiaml/BandoKY22} construct a joint attractor from both
time series for this purpose. We follow their experimental setup and denote
the persistence diagram of this joint attractor by $D_{AB}$. Direction is
then assessed by comparing
$d(D_{AB},D_A)$ and $d(D_{AB},D_B)$. Under $A\to B$, the reconstruction
$X_B$ is expected to represent the total attractor, so
$D_B$ should be closer to $D_{AB}$ than $D_A$ is; the roles are reversed
under $B\to A$~\citep{DBLP:journals/jsiaml/BandoKY22}. The full
reconstruction is described in
Appendix~\ref{app:spring-mass-configuration}.

We compare PCE with three groups of methods, shown in the same order in
Figure~\ref{fig:spring-mass}. First, bottleneck and Wasserstein distances
compare diagrams through optimal matchings. The bottleneck distance records
the largest matched displacement, whereas the Wasserstein distance aggregates
the matched displacements~\citep{cohen2007stability,mileyko2011probability}.
Second, Betti curves count the features alive across the filtration
\citep{johnson2021instability}; persistence landscapes organize the features
into layers of piecewise-linear functions~\citep{bubenik2015landscapes}; and
persistence silhouettes form a persistence-weighted average of these
functions~\citep{chazal2015silhouettes}. Finally, the persistence scale-space
kernel and persistence images give smoothed representations on the diagram
plane. The former uses heat diffusion and a Hilbert-space distance
\citep{reininghaus2015stable}, while the latter places weighted Gaussian
functions on the birth--persistence plane and discretizes the result
\citep{adams2017persistenceimages}.

Unlike these baselines, PCE directly produces the directed pair
$(\Delta H_B^A,\Delta H_A^B)$ from $D_A$ and $D_B$, without constructing
$D_{AB}$. The entropy excess $\Delta H_X^Y$ measures the cost of using
$D_Y$ to explain $D_X$. Under $A\to B$,
system $B$ carries information from $A$ together with its own response.
Consequently, $D_B$ explains $D_A$ at low cost, whereas $D_A$ does not
fully explain $D_B$: $\Delta H_A^B$ is small and $\Delta H_B^A$ is
large. This places the $A\to B$ cases in the lower-right half of the first
panel. The interpretation is reversed for $B\to A$, whose points lie in the
upper-left half. Figure~\ref{fig:spring-mass} shows that all $16$ one-way
cases lie in the expected half-plane without constructing $D_{AB}$.

The unexplained mass gives a bounded and more direct view of the same
asymmetry. Specifically, $p_X^Y(\partial)$ is the fraction of the
persistence mass of $D_X$ not explained by $D_Y$. For $A\to B$, the
median pair $(p_B^A(\partial),p_A^B(\partial))$ is $(0.888,0.024)$.
Thus, $88.8\%$ of the persistence mass of $D_B$ is not explained by
$D_A$, while only $2.4\%$ of the persistence mass of $D_A$ is not
explained by $D_B$. For $B\to A$, the median pair is
$(0.014,0.883)$, and the interpretation is reversed. The entropy excess
measures the average severity of the difference, whereas the unexplained mass
reports its share of the persistence probability mass. The independent case
has almost equal values in both directions, so it shows no directional
preference; their small size alone should not be read as evidence of coupling.

\begin{figure}[t]
    \centering
    \includegraphics[width=\textwidth]{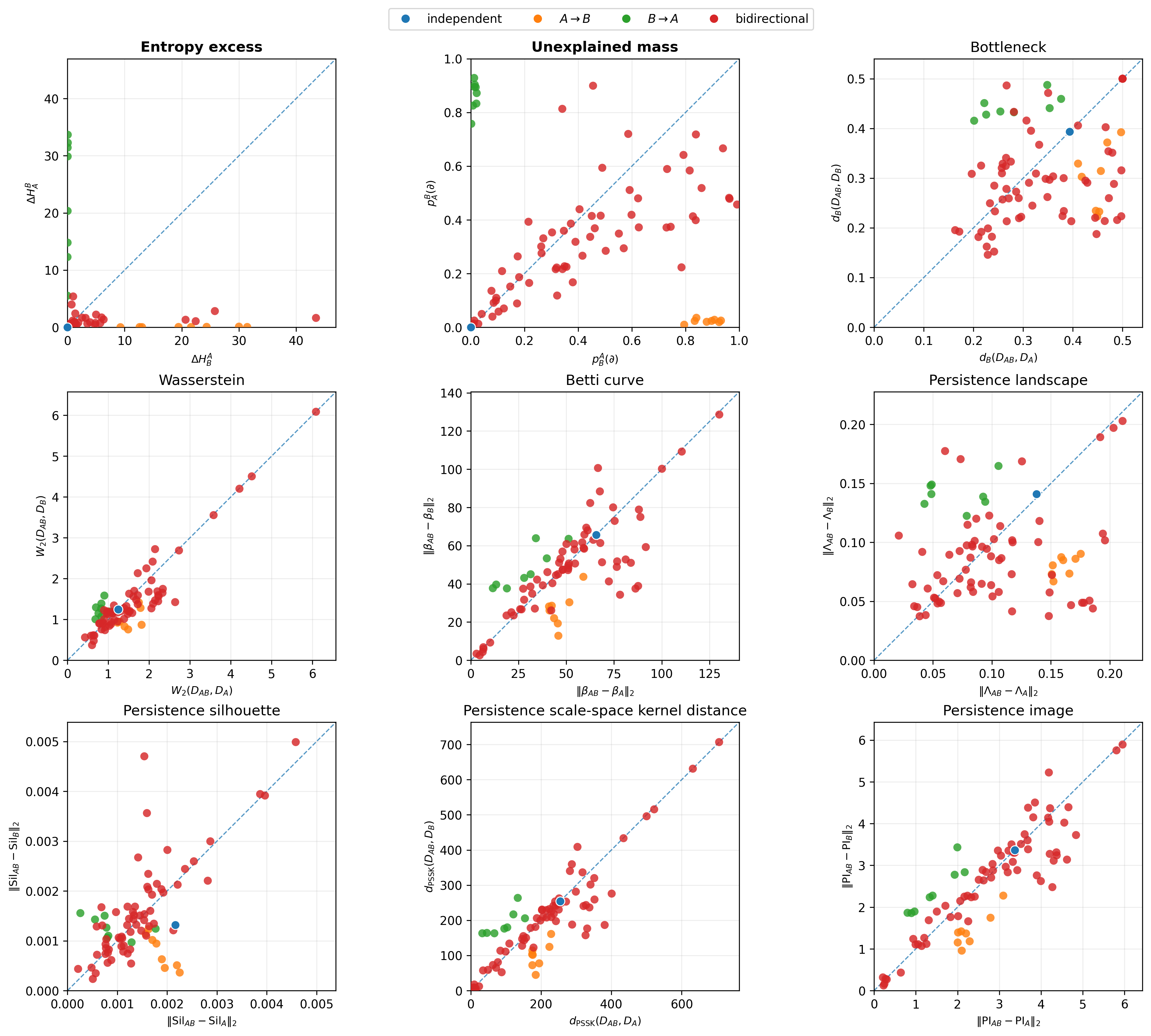}
    \caption{Spring--mass directional comparison. Each point represents one
    coupling pair: blue denotes the independent case, orange denotes
    $A\to B$, green denotes $B\to A$, and red denotes bidirectional
    coupling. The dashed line is $y=x$; under the common plotting
    convention, the lower-right half favors $A\to B$, the upper-left half
    favors $B\to A$, and points near the diagonal show little directional
    preference. The two PCE-based panels are computed only from $D_A$ and
    $D_B$. They plot $(\Delta H_B^A,\Delta H_A^B)$ and
    $(p_B^A(\partial),p_A^B(\partial))$, respectively. Each coordinate in
    the second panel is the fraction of persistence mass not explained in that
    direction. Every remaining panel uses the joint diagram and plots
    $(d(D_{AB},D_A),d(D_{AB},D_B))$. In order, the panels use bottleneck
    distance, $2$-Wasserstein distance, $L^2$ distances between Betti
    curves, persistence landscapes, and persistence silhouettes, the
    persistence scale-space kernel distance, and Euclidean distance between
    persistence images.}
    \label{fig:spring-mass}
\end{figure}

To compare the nine panels numerically, we treat the two plotted coordinates
of each method as a two-dimensional representation and measure regime
separation by PERMANOVA $R^2$ with Euclidean distance
\citep{anderson2001nonparametric}. The four-regime evaluation uses all
$81$ settings with the independent, $A\to B$, $B\to A$, and
bidirectional labels. The two-direction evaluation uses only the balanced set
of $16$ one-way cases. These scores measure between-group separation
relative to total variation; they are not classification accuracies. In
particular, the four-regime result is descriptive because the group sizes are
$1,8,8,$ and $64$.

\begin{table}[t]
    \caption{Separation of the coupling regimes in the two-dimensional
    representations shown in Figure~\ref{fig:spring-mass}. Larger PERMANOVA
    $R^2$ indicates stronger separation. The best result in each evaluation
    is shown in bold.}
    \label{tab:spring-mass-separation}
    \centering
    \small
    \begin{tabular}{lcc}
        \toprule
        Method
        & Four regimes $R^2$
        & Two directions $R^2$ \\
        \midrule
        Entropy excess
        & $\mathbf{0.560}$
        & $0.740$ \\
        Unexplained mass
        & $0.449$
        & $\mathbf{0.993}$ \\
        Bottleneck
        & $0.180$
        & $0.733$ \\
        Wasserstein
        & $0.050$
        & $0.662$ \\
        Betti curve
        & $0.091$
        & $0.513$ \\
        Persistence landscape
        & $0.197$
        & $0.876$ \\
        Persistence silhouette
        & $0.082$
        & $0.616$ \\
        Persistence scale-space kernel
        & $0.082$
        & $0.678$ \\
        Persistence image
        & $0.107$
        & $0.508$ \\
        \bottomrule
    \end{tabular}
\end{table}

Entropy excess gives the strongest four-regime separation, followed by
unexplained mass. For the balanced one-way comparison, unexplained mass
reaches $R^2=0.993$, followed by the persistence landscape at $0.876$.
Entropy excess reaches $0.740$, slightly above the bottleneck distance at
$0.733$. Thus the two PCE quantities provide the strongest overall
four-regime separation, while unexplained mass also gives the clearest
separation between the two one-way directions. Both are computed without the
joint diagram $D_{AB}$.

The spring--mass experiment shows that PCE can identify causal direction in a
linear dynamical system. Our preliminary tests on nonlinear and chaotic
systems produced mixed results: PCE identified the
causal direction in some systems but not in others. Methods based on a joint
attractor showed similarly mixed performance, indicating that nonlinear and
chaotic dynamics remain challenging for TDA-based causality analysis in
general. Further study is needed to understand when PCE works well and how it
can be improved for these systems.

\FloatBarrier

\subsection{Topology-aware knowledge distillation}
\label{sec:topology-aware-kd}

Topology-aware knowledge distillation transfers the global structure of
intermediate features in addition to class probabilities. TopKD approximates
the persistence images of teacher and student feature clouds with a frozen
RipsNet~\citep{pmlr-v196-surrel22a} and minimizes the symmetric loss
$\mathcal L_{\mathrm{Top}}=
\Vert \widehat{\mathrm{PI}}_T-\widehat{\mathrm{PI}}_S \Vert_2^2$
\citep{kim2024topkd}. In knowledge distillation, however, the two networks have
different roles: the student should reproduce the teacher's information. We
therefore replace this symmetric topology loss with PCE-based objectives
directed from the teacher to the student.

For each mini-batch, we $\ell_2$-normalize the teacher and student features and
compute their exact finite $H_0$ Vietoris--Rips persistence diagrams,
$D_T$ and $D_S$. The student diagram $D_S$ induces the probability $p_T^S$
on the teacher diagram $D_T$. We compare two teacher-to-student topology
terms. PCE uses the augmented probability and keeps the unexplained event.
As an experimental variant, we also consider \textbf{explained-mass PCE}
(EM-PCE), which removes the unexplained event and renormalizes only the mass
explained on $D_T$. Thus, PCE measures both relative changes among teacher
features and absolute unexplained mass, whereas EM-PCE focuses on the relative
allocation among the explained teacher features. Their training objectives are
$$
\mathcal L_{\mathrm{PCE}}
=
\mathcal L_{\mathrm{CE}}
+2\mathcal L_{\mathrm{KD}}
+c_{\mathrm{batch}}\Delta H_T^S,
\qquad
\mathcal L_{\mathrm{EM\text{-}PCE}}
=
\mathcal L_{\mathrm{CE}}
+2\mathcal L_{\mathrm{KD}}
+5\Delta H_T^{S,\mathrm{EM}}.
$$
Both terms use the same direction because the student should reproduce the
teacher topology: $D_S$ is used to explain $D_T$, while the reverse quantity
$H_S^T$ would target the opposite transfer. For PCE, we separate the loss
direction from its optimization scale by matching its feature-gradient norm to
that of the TopKD topology term with a detached batch-wise coefficient
$c_{\mathrm{batch}}$. EM-PCE instead uses the fixed topology coefficient $5$
from TopKD. PCE is defined in Section~\ref{sec:persistent-cross-entropy}, while
the EM-PCE probability and the remaining implementation details are given in
Appendix~\ref{app:knowledge-distillation-configuration}. The frozen RipsNet is
still evaluated in these runs, so this experiment compares the topology
objectives and is not intended as a comparison of computational cost.

We use CIFAR-100~\citep{krizhevsky2009learning} with a ResNet56 teacher and a
ResNet20 student~\citep{he2016deep}, following the TopKD training schedule.
The three runs use seed $7$, the same student initialization, mini-batch order,
augmentation, optimizer, and CE and KD terms; only the topology objective
differs. The final PCE configuration was fixed using a deterministic validation
split made only from the CIFAR-100 training set. The official test set was
evaluated after the configuration was fixed. We report the final accuracy and
the mean over the last $10$ epochs, without selecting a checkpoint by test
accuracy. Full settings are given in
Appendix~\ref{app:knowledge-distillation-configuration}.

\begin{table}[h]
    \caption{Seed-$7$ CIFAR-100 Top-1 accuracy (\%). Final is the accuracy at
    epoch $240$, and Last-10 mean is the mean over epochs $231$--$240$.}
    \label{tab:topkd-pce-results}
    \centering
    \small
\begin{tabular}{lcc}
        \toprule
        Method & Final & Last-10 mean \\
        \midrule
        TopKD & $70.99$ & $71.000$ \\
        PCE & $71.25$ & $71.223$ \\
        EM-PCE & $\mathbf{71.49}$ & $\mathbf{71.606}$ \\
        \bottomrule
    \end{tabular}
\end{table}

EM-PCE gives the highest final accuracy and last-$10$-epoch mean in this
seed-$7$ experiment. Relative to TopKD, it improves these values by $0.50$ and
$0.606$ percentage points, respectively. PCE also improves both values, but by
smaller amounts. These results suggest that the relative
allocation of the explained teacher mass can provide a useful optimization
signal even without directly penalizing the unexplained mass. Because EM-PCE
uses a fixed coefficient while PCE uses batch-wise gradient matching, the
current experiment compares the implemented objectives rather than isolating
the effect of explained-mass normalization alone. During PCE training, the
epoch-averaged unexplained mass decreases overall from $0.2494$ in the first
epoch to $0.00756$ in the final epoch. This indicates that the student diagram
explains more of the teacher persistence mass by the end of training. We treat
this overall decrease as a diagnostic of topology matching, not as evidence
that it directly causes the accuracy gain.

\FloatBarrier

\subsection*{AI use statement}

OpenAI Codex was used in preparing this work. In the theoretical development,
it was used primarily to edit language, improve notation and mathematical
typesetting, identify inconsistencies, and check derivations and proofs; the
definitions, mathematical claims, and final arguments were determined and
verified by the authors. It was used more extensively in the experimental
work to inspect and modify code, implement the augmented probability
construction, rerun experiments, generate figures, and verify numerical
identities and test results. It also assisted with literature searches and
manuscript editing. The authors reviewed all AI-assisted material and take full
responsibility for the final content of the paper.

% ============================================================
% Bibliography
% ============================================================
\bibliography{references}
\bibliographystyle{plainnat}

% ============================================================
% Appendix
% ============================================================
\appendix

\section{Proofs for Probabilities Induced on the Augmented Event Space}

\subsection{Regularity assumptions on \texorpdfstring{$g$}{g}}
\label{app:g-regularity}

The uniform stability results use a function
$g:[0,\infty)\to[0,\infty)$ satisfying
$$
g(0)=0,
\qquad
0\leq g(t)\leq M_g
\quad\text{for every }t\ge0,
$$
for some finite constant $M_g$. We further assume that $g$ is
monotonically increasing and globally Lipschitz: there exists $L_g<\infty$ such that
$$
|g(s)-g(t)|
\leq
L_g|s-t|
\qquad
\text{for every }s,t\ge0.
$$
The condition $g(0)=0$ makes a point contribute zero when it is matched to
the diagonal. Boundedness and global Lipschitz continuity make the constants
in the stability estimates independent of the diagrams and their maximum
persistence. Together with the constant $L_0$ from
Definition~\ref{def:similarity_function}, we define
$$
L_\phi=L_g+M_gL_0.
$$
The function used in the experiments satisfies these assumptions:
$$
g(t)=\frac{t}{1+t}.
$$
\subsection{Maximum-entropy derivation of the Gaussian similarity function}
\label{app:gaussian-similarity}
The Gaussian uniquely maximizes differential entropy among densities with a
fixed mean and covariance~\citep{cover2006elements}. Set
$\mathbb{E}_q[U]=u$ and
$\Sigma=\operatorname{diag}(\sigma_b^2,\sigma_\ell^2)$. The maximizer is
$$
q_u(v)=\frac{1}{2\pi\sigma_b\sigma_\ell}
e^{-\frac{1}{2}(v-u)^\top\Sigma^{-1}(v-u)}.
$$
For $u=(b_u,\ell_u)$ and $v=(b_v,\ell_v)$, the quadratic form is
$$
(v-u)^\top\Sigma^{-1}(v-u)
=
\left(\frac{b_v-b_u}{\sigma_b}\right)^2
+
\left(\frac{\ell_v-\ell_u}{\sigma_\ell}\right)^2.
$$
The density is largest at $v=u$, where
$$
q_u(u)=\frac{1}{2\pi\sigma_b\sigma_\ell}.
$$
Normalizing the density by this maximum removes the dimensional prefactor and
gives
\begin{align*}
\frac{q_u(v)}{q_u(u)}
&=
e^{-\frac{1}{2}(v-u)^\top\Sigma^{-1}(v-u)}\\
&=
e^{
-\frac{1}{2}
\left\{
\left(\frac{b_v-b_u}{\sigma_b}\right)^2
+
\left(\frac{\ell_v-\ell_u}{\sigma_\ell}\right)^2
\right\}
}\\
&=
\kappa(u,v).
\end{align*}

Let
$$
\rho(u,v)^2
=
\left(\frac{b_u-b_v}{\sigma_b}\right)^2
+
\left(\frac{\ell_u-\ell_v}{\sigma_\ell}\right)^2.
$$
This is the Euclidean metric after coordinate
rescaling, and $\kappa(u,v)=h(\rho(u,v))$ for
$h(r)=e^{-\frac{r^2}{2}}$, which has all properties required in
Definition~\ref{def:similarity_function}. In particular, $h(0)=1$, $h$ is
strictly decreasing on $(0,\infty)$, and $h(r)$ approaches zero as $r$
approaches infinity.

It remains to verify the Lipschitz condition. Differentiating with respect to
the second argument gives
$$
\frac{\partial\kappa(u,v)}{\partial b_v}
=
-\frac{b_v-b_u}{\sigma_b^2}\kappa(u,v),
\qquad
\frac{\partial\kappa(u,v)}{\partial\ell_v}
=
-\frac{\ell_v-\ell_u}{\sigma_\ell^2}\kappa(u,v).
$$
Let $\sigma_{\min}=\min\{\sigma_b,\sigma_\ell\}$. Then
$$
\Vert \nabla_v\kappa(u,v) \Vert
\leq
\frac{\rho(u,v)e^{-\frac{\rho(u,v)^2}{2}}}{\sigma_{\min}}
\leq
\frac{1}{\sigma_{\min}}.
$$
The mean value theorem therefore gives
$|\kappa(u,v)-\kappa(u,v')|\leq
\sigma_{\min}^{-1} \Vert v-v' \Vert$, proving the Lipschitz condition.

\subsection{Verification of Definition~\ref{def:induced_probability_measure}}
\label{app:proof_induced_probability_measure}

For any finite off-diagonal multiset $D$, let $\operatorname{supp}(D)$ be the
set of its distinct coordinates and let $m_D(z)$ be the multiplicity of
$z\in\operatorname{supp}(D)$. Define the occurrence set
$$
\widehat D
=
\left\{
(z,j):z\in\operatorname{supp}(D),\quad
j\in\{1,\ldots,m_D(z)\}
\right\}
$$
and the coordinate projection $\pi_D(z,j)=z$. Thus two occurrences with the
same coordinates remain distinct elements of $\widehat D$. We write
$\widehat D_X$ and $\pi_X$ for the occurrence set and projection associated
with $D_X$.
For $\xi\in\widehat D_X$, the notation used below means
$$
p_X(\xi)
=
\frac{\ell_{\pi_X(\xi)}}
{\sum\limits_{\eta\in\widehat D_X}\ell_{\pi_X(\eta)}},
\qquad
p_X^Y(\xi)
=
w_X^Y(\pi_X(\xi))p_X(\xi).
$$
The unexplained-event mass is correspondingly
$$
p_X^Y(\partial)
=
\sum\limits_{\xi\in\widehat D_X}
\left(1-w_X^Y(\pi_X(\xi))\right)p_X(\xi),
\qquad
p_X(\partial)=0.
$$
Define the finite event space and its sigma-algebra by
$$
\Omega_X
=
\widehat D_X\cup\{\partial\},
\qquad
\mathcal{F}_X
=
2^{\Omega_X}.
$$
First, the occurrence-level persistence masses sum to one:
\begin{align*}
\sum\limits_{\xi\in\widehat D_X}p_X(\xi)
&=
\frac{
\sum\limits_{\xi\in\widehat D_X}\ell_{\pi_X(\xi)}
}{
\sum\limits_{\eta\in\widehat D_X}\ell_{\pi_X(\eta)}
}\\
&=1.
\end{align*}
Every $p_X(\xi)$ is positive, and $0<w_X^Y(\pi_X(\xi))\leq1$.
Therefore all masses assigned by $p_X^Y$ are nonnegative. Their total is
\begin{align*}
&\sum\limits_{\xi\in\widehat D_X}p_X^Y(\xi)
+p_X^Y(\partial)\\
&=
\sum\limits_{\xi\in\widehat D_X}
w_X^Y(\pi_X(\xi))p_X(\xi)
+
\sum\limits_{\xi\in\widehat D_X}
\left(1-w_X^Y(\pi_X(\xi))\right)p_X(\xi)\\
&=
\sum\limits_{\xi\in\widehat D_X}p_X(\xi)\\
&=1.
\end{align*}
Thus the atom masses extend by finite sums to a probability measure
$p_X^Y$ on $(\Omega_X,\mathcal{F}_X)$. The same argument, with
$p_X(\partial)=0$, makes $p_X$ a probability measure on this space.

\medskip

\subsection{\texorpdfstring{$1$-Wasserstein distance}{1-Wasserstein distance}}
\label{app:w1-definition}

For finite persistence diagrams $D$ and $D'$, we use the diagram
$1$-Wasserstein distance with the Euclidean ground metric in the
birth--persistence coordinates fixed in Section~\ref{sec:background}
\citep{mileyko2011probability}. Following the standard convention, the
diagonal $\Delta$ is included with infinite multiplicity. Let
$\Gamma(D,D')$ be the set of bijections between $D\cup\Delta$ and
$D'\cup\Delta$ that have only finitely many nonzero matching costs. Then
$$
W_1(D,D')
=
\inf\limits_{\gamma\in\Gamma(D,D')}
\sum\limits_{z\in D\cup\Delta}
\Vert z-\gamma(z)\Vert.
$$
The bijections match occurrences, so repeated points remain distinct. Matching
an off-diagonal point to $\Delta$ gives its distance to the diagonal and
therefore includes unmatched points in the same sum. In birth--persistence
coordinates, the diagonal is
$\{(t,0):t\in\mathbb{R}\}$. The linear change of coordinates
$(b,d)\mapsto(b,d-b)$ is invertible, so this distance is equivalent to the
usual birth--death version up to fixed multiplicative constants.

\subsection{Proof of Proposition~\ref{proposition:self-recovery}}
\label{app:proof-self-recovery}

We prove the three claims in order. As in
Appendix~\ref{app:proof_induced_probability_measure}, repeated points are
treated as distinct atoms in the occurrence set $\widehat D_X$, and
$\pi_X:\widehat D_X\to D_X$ returns the coordinate of each occurrence.

\medskip
\noindent\textit{Proof of (i).}
For every $u\in D_X$, Definition~\ref{def:induced_weight} gives
\begin{align*}
\delta_X^X(u)
&=
\alpha_u(D_X)-\alpha_u(D_X)\\
&=0.
\end{align*}
Consequently,
$$
w_X^X(u)
=
e^{-\frac{\delta_X^X(u)^2}{2\tau^2}}
=1.
$$
It follows that every atom $\xi\in\widehat D_X$ satisfies
$$
p_X^X(\xi)
=
w_X^X(\pi_X(\xi))p_X(\xi)
=
p_X(\xi).
$$
The mass assigned to the unexplained event is
\begin{align*}
p_X^X(\partial)
&=
\sum\limits_{\xi\in\widehat D_X}
\left(1-w_X^X(\pi_X(\xi))\right)p_X(\xi)\\
&=0\\
&=p_X(\partial).
\end{align*}
Thus $p_X^X=p_X$ on the entire event space
$\widehat D_X\cup\{\partial\}$.

\medskip
\noindent\textit{Proof of (ii).}
Suppose first that $\delta_X^Y=0$ on $D_X$. Then
$$
w_X^Y(u)
=
e^{-\frac{\delta_X^Y(u)^2}{2\tau^2}}
=1
$$
for every $u\in D_X$. The same calculation as in part (i) gives
$p_X^Y(\xi)=p_X(\xi)$ for every $\xi\in\widehat D_X$ and
$p_X^Y(\partial)=p_X(\partial)=0$. Hence $p_X^Y=p_X$.

Conversely, suppose that $p_X^Y=p_X$. Every off-diagonal persistence point
has positive persistence, so $p_X(\xi)>0$ for every
$\xi\in\widehat D_X$. Therefore,
\begin{align*}
p_X^Y(\xi)=p_X(\xi)
&\quad\Longrightarrow\quad
w_X^Y(\pi_X(\xi))p_X(\xi)=p_X(\xi)\\
&\quad\Longrightarrow\quad
w_X^Y(\pi_X(\xi))=1.
\end{align*}
By the definition of $w_X^Y$,
$$
e^{-\frac{\delta_X^Y(\pi_X(\xi))^2}{2\tau^2}}=1.
$$
Since $\tau>0$ and $e^{-a}=1$ for $a\geq0$ only when $a=0$, we obtain
$\delta_X^Y(\pi_X(\xi))=0$. Every point of $D_X$ has an occurrence in
$\widehat D_X$, so $\delta_X^Y=0$ on $D_X$.

\medskip
\noindent\textit{Proof of (iii).}
For probability measures $p$ and $q$ on
$\Omega_X=\widehat D_X\cup\{\partial\}$, the total variation distance is
$$
d_{\mathrm{TV}}(p,q)
=
\sup\limits_{A\subseteq\Omega_X}|p(A)-q(A)|.
$$
For $B\subseteq\widehat D_X$, define the mass removed from $B$ by
$$
m(B)
=
\sum\limits_{\xi\in B}
\left(1-w_X^Y(\pi_X(\xi))\right)p_X(\xi).
$$
Every summand is nonnegative, and hence
$$
0
\leq
m(B)
\leq
m(\widehat D_X)
=
p_X^Y(\partial).
$$
If $\partial\notin A$, then $A\subseteq\widehat D_X$ and
\begin{align*}
p_X^Y(A)-p_X(A)
&=
\sum\limits_{\xi\in A}
\left(w_X^Y(\pi_X(\xi))-1\right)p_X(\xi)\\
&=-m(A).
\end{align*}
Therefore,
$$
|p_X^Y(A)-p_X(A)|
=m(A)
\leq
p_X^Y(\partial).
$$
If $\partial\in A$, write $A=B\cup\{\partial\}$ with
$B\subseteq\widehat D_X$. Since $p_X(\partial)=0$,
\begin{align*}
p_X^Y(A)-p_X(A)
&=
p_X^Y(\partial)+p_X^Y(B)-p_X(B)\\
&=
p_X^Y(\partial)-m(B).
\end{align*}
This value lies between $0$ and $p_X^Y(\partial)$, so the same upper bound
holds for every $A\subseteq\Omega_X$. Taking $A=\{\partial\}$ gives
\begin{align*}
|p_X^Y(\{\partial\})-p_X(\{\partial\})|
&=
|p_X^Y(\partial)-0|\\
&=
p_X^Y(\partial),
\end{align*}
and therefore the upper bound is attained. Thus
$$
d_{\mathrm{TV}}(p_X^Y,p_X)=p_X^Y(\partial).
$$

\subsection{Stability lemmas}
\label{app:alpha-wp-stability}

For the stability proofs, let $D$ be any finite persistence diagram and define
$$
\alpha_u(D)
=
\sum\limits_{v\in D}\phi_u(v).
$$
In birth--persistence coordinates, every $v\in\Delta$ has
$\ell_v=0$.

\begin{lemma}
\label{lem:alpha-wp-stability}
Let $D_X,D_Y$ be finite persistence diagrams. If
$$
L_\phi=L_g+M_gL_0,
$$
then, for every $u\in\mathcal{Z}$,
$$
|\alpha_u(D_X)-\alpha_u(D_Y)|
\leq
L_\phi W_1(D_X,D_Y).
$$
\end{lemma}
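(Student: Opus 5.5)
The argument compares $D_X$ and $D_Y$ along an optimal matching. Diagonal points contribute nothing to $\alpha_u$, so the difference of the two sums reduces to a sum of per-pair differences $\phi_u(v)-\phi_u(\gamma(v))$, each of which is controlled by the matching cost $\Vert v-\gamma(v)\Vert$.

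\emph{Extending $\phi_u$ to the diagonal.} First note that $\phi_u(v)=k(u,v)g(\ell_v)$ makes sense for every $v\in\mathcal{Z}$ with $\ell_v\ge 0$. For $v\in\Delta$ we have $\ell_v=0$, so $g(0)=0$ gives $\phi_u(v)=0$. Hence, for any bijection $\gamma\in\Gamma(D_X,D_Y)$ with finitely many nonzero costs, we may rewrite both sums over the augmented sets:
$$\alpha_u(D_X)=\sum_{z\in D_X\cup\Delta}\phi_u(z),\qquad \alpha_u(D_Y)=\sum_{z\in D_X\cup\Delta}\phi_u(\gamma(z)).$$
Both sums have only finitely many nonzero terms. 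Pairs where $z$ and $\gamma(z)$ are both diagonal contribute zero to each sum, so we may subtract term by term and get
$$|\alpha_u(D_X)-\alpha_u(D_Y)|\le\sum_{z}|\phi_u(z)-\phi_u(\gamma(z))|.$$
Here the occurrence labeling of Appendix~\ref{app:proof_induced_probability_measure} handles multiplicities.

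\emph{Lipschitz bound for $\phi_u$.} The key step is to show that $v\mapsto\phi_u(v)$ is $L_\phi$-Lipschitz on $\{\ell\ge0\}$, uniformly in $u$. Split the difference as
$$\phi_u(v)-\phi_u(v')=\bigl(k(u,v)-k(u,v')\bigr)g(\ell_v)+k(u,v')\bigl(g(\ell_v)-g(\ell_{v'})\bigr).$$
For the first term, use $0\le g\le M_g$ and the Lipschitz condition (ii) of Definition~\ref{def:similarity_function}, which give the bound $M_gL_0\Vert v-v'\Vert$. For the second term, use $0<k\le 1$ and the global Lipschitz property of $g$ from Appendix~\ref{app:g-regularity}, which give the bound $L_g|\ell_v-\ell_{v'}|\le L_g\Vert v-v'\Vert$. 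Adding the two yields
$$|\phi_u(v)-\phi_u(v')|\le(L_g+M_gL_0)\Vert v-v'\Vert=L_\phi\Vert v-v'\Vert.$$
In particular, when $v'$ is the diagonal point matched to $v$, this bounds $\phi_u(v)$ by $L_\phi$ times the distance to the diagonal.

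\emph{Conclusion.} Combining the two steps gives $|\alpha_u(D_X)-\alpha_u(D_Y)|\le L_\phi\sum_z\Vert z-\gamma(z)\Vert$ for every admissible $\gamma$. Taking the infimum over $\gamma$ yields the claimed bound $L_\phi W_1(D_X,D_Y)$.

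The main obstacle is bookkeeping rather than analysis. We must justify that the infinitely many diagonal-to-diagonal pairs contribute zero, so that the sums are finite and can be reindexed via $\gamma$. We must also ensure that diagonal points are assigned consistent coordinates $(t,0)$, which is where $g(0)=0$ is essential.
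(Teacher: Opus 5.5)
Your proposal is correct and follows essentially the same route as the paper: extend $\phi_u$ by zero on the diagonal using $g(0)=0$, bound each matched pair by $L_\phi\Vert v-\gamma(v)\Vert$, sum over the matching, and take the infimum over $\Gamma(D_X,D_Y)$. The only difference is the intermediate term in the split: you add and subtract $k(u,v')g(\ell_v)$, whereas the paper uses $k(u,v)g(\ell_{v'})$, and this has no effect on the argument.
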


\begin{proof}
Fix $u\in\mathcal{Z}$ and a matching
$\gamma\in\Gamma(D_X,D_Y)$. Because $g(0)=0$, every diagonal point
$v\in\Delta$ satisfies
$$
\phi_u(v)
=
k(u,v)g(\ell_v)
=0.
$$
We may therefore add the matched diagonal occurrences to the sums defining
$\alpha_u(D_X)$ and $\alpha_u(D_Y)$ without changing either value. Pairs in
which both points lie on the diagonal contribute zero and may be omitted.

Consider any remaining matched pair $v$ and $v'=\gamma(v)$. Adding and
subtracting $k(u,v)g(\ell_{v'})$ gives
\begin{align*}
|\phi_u(v)-\phi_u(v')|
&=
|k(u,v)g(\ell_v)-k(u,v')g(\ell_{v'})|\\
&\leq
k(u,v)|g(\ell_v)-g(\ell_{v'})|
+g(\ell_{v'})|k(u,v)-k(u,v')|.
\end{align*}
Since $0<k\leq1$, $g\leq M_g$, $g$ is $L_g$-Lipschitz, and $k$ is
$L_0$-Lipschitz in its second argument,
\begin{align*}
|\phi_u(v)-\phi_u(v')|
&\leq
L_g|\ell_v-\ell_{v'}|
+M_gL_0\Vert v-v'\Vert\\
&\leq
\left(L_g+M_gL_0\right)\Vert v-v'\Vert\\
&=
L_\phi\Vert v-v'\Vert.
\end{align*}
The second inequality uses
$|\ell_v-\ell_{v'}|\leq\Vert v-v'\Vert$ in birth--persistence
coordinates. Summing over the matching yields
\begin{align*}
|\alpha_u(D_X)-\alpha_u(D_Y)|
&\leq
\sum\limits_{v\in D_X\cup\Delta}
|\phi_u(v)-\phi_u(\gamma(v))|\\
&\leq
L_\phi
\sum\limits_{v\in D_X\cup\Delta}
\Vert v-\gamma(v)\Vert.
\end{align*}
This bound holds for every admissible matching $\gamma$. Taking the infimum
over $\Gamma(D_X,D_Y)$ proves
$$
|\alpha_u(D_X)-\alpha_u(D_Y)|
\leq
L_\phi W_1(D_X,D_Y).
$$
\end{proof}

\medskip

\begin{lemma}
\label{lem:delta-wp-stability}
We have
$$
\Vert \delta_X^Y \Vert_\infty
\leq
L_\phi W_1(D_X,D_Y).
$$
Here the supremum is taken over $u\in D_X$.
\end{lemma}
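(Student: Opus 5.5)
The bound is a direct specialization of Lemma~\ref{lem:alpha-wp-stability}. By Definition~\ref{def:induced_weight}, for every $u\in D_X$ we have
$$
\delta_X^Y(u)=\alpha_u(D_Y)-\alpha_u(D_X),
$$
so $|\delta_X^Y(u)|=|\alpha_u(D_X)-\alpha_u(D_Y)|$. Lemma~\ref{lem:alpha-wp-stability} holds for every $u\in\mathcal{Z}$. In particular it holds for every coordinate $u=\pi_X(\xi)$ of an occurrence $\xi\in\widehat D_X$, which gives
$$
|\delta_X^Y(u)|\leq L_\phi W_1(D_X,D_Y).
$$

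The important point is that the right-hand side does not depend on $u$. The same matching argument applies for every evaluation point, because the Lipschitz constant $L_\phi=L_g+M_gL_0$ is uniform in $u$: the $L_0$ bound on $k$ holds uniformly in the first argument, and $k\leq1$. Taking the supremum over the finitely many points of $D_X$ (counted with multiplicity, which changes nothing since $\delta_X^Y$ depends only on coordinates) then yields
$$
\Vert\delta_X^Y\Vert_\infty\leq L_\phi W_1(D_X,D_Y).
$$

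There is essentially no obstacle here. The only thing to check is that Lemma~\ref{lem:alpha-wp-stability} was stated for an arbitrary $u\in\mathcal{Z}$ rather than a fixed one, so that no $u$-dependent constant enters. We should also note that the sum defining $\alpha_u$ runs over the full multisets $D_X$ and $D_Y$, including the point $u$ itself when $u\in D_X$. This is exactly the quantity the lemma controls.
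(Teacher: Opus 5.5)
Your proof is correct and matches the paper's argument. Like the paper, you apply Lemma~\ref{lem:alpha-wp-stability} at each $u\in D_X$ to the identity $\delta_X^Y(u)=\alpha_u(D_Y)-\alpha_u(D_X)$, note that the bound $L_\phi W_1(D_X,D_Y)$ does not depend on $u$, and take the supremum over $D_X$.
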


\begin{proof}
For every $u\in D_X$, Definition~\ref{def:induced_weight} and
Lemma~\ref{lem:alpha-wp-stability} give
\begin{align*}
|\delta_X^Y(u)|
&=
|\alpha_u(D_Y)-\alpha_u(D_X)|\\
&\leq
L_\phi W_1(D_X,D_Y).
\end{align*}
The right-hand side does not depend on $u$. Taking the supremum over
$u\in D_X$ proves the claim.
\end{proof}

\medskip

\subsection{Proof of Theorem~\ref{thm:unexplained-mass-stability}}
\label{app:proof-unexplained-mass-stability}
For the first claim, Lemma~\ref{lem:delta-wp-stability} gives
$R=L_\phi W_1(D_X,D_Y)$ such that
$|\delta_X^Y(\pi_X(\xi))|\leq R$ for every $\xi\in\widehat D_X$.
By Proposition~\ref{proposition:self-recovery}(iii) and
Definition~\ref{def:induced_probability_measure},
\begin{align*}
d_{\mathrm{TV}}(p_X^Y,p_X)
&=
p_X^Y(\partial)\\
&=
\sum\limits_{\xi\in\widehat D_X}p_X(\xi)
\left(
1-e^{-\frac{\delta_X^Y(\pi_X(\xi))^2}{2\tau^2}}
\right).
\end{align*}
For each $\xi\in\widehat D_X$, the number
$$
t_\xi
=
\frac{\delta_X^Y(\pi_X(\xi))^2}{2\tau^2}
$$
is nonnegative. Applying $1-e^{-t}\leq t$ to each $t_\xi$ gives
\begin{align*}
d_{\mathrm{TV}}(p_X^Y,p_X)
&\leq
\sum\limits_{\xi\in\widehat D_X}p_X(\xi)
\frac{\delta_X^Y(\pi_X(\xi))^2}{2\tau^2}\\
&\leq
\sum\limits_{\xi\in\widehat D_X}p_X(\xi)
\frac{R^2}{2\tau^2}\\
&=
\frac{R^2}{2\tau^2}
\sum\limits_{\xi\in\widehat D_X}p_X(\xi)\\
&=
\frac{R^2}{2\tau^2}\\
&=
\frac{L_\phi^2}{2\tau^2}W_1(D_X,D_Y)^2.
\end{align*}

For the second claim, fix $D_X$ and define
$$
f(a)=e^{-\frac{a^2}{2\tau^2}}.
$$
Differentiating gives
$$
f'(a)
=
-\frac{a}{\tau^2}e^{-\frac{a^2}{2\tau^2}}.
$$
If $x=\frac{|a|}{\tau}$, then
$$
|f'(a)|
=
\frac{1}{\tau}xe^{-\frac{x^2}{2}}.
$$
The derivative of $xe^{-\frac{x^2}{2}}$ on $[0,\infty)$ is
$$
(1-x^2)e^{-\frac{x^2}{2}},
$$
so the maximum occurs at $x=1$ and equals $e^{-\frac{1}{2}}$. Therefore,
$$
\sup\limits_{a\in\mathbb{R}}|f'(a)|
=
\frac{1}{\tau\sqrt e}.
$$
Moreover, Lemma~\ref{lem:alpha-wp-stability} gives, for every
$u\in\mathcal{Z}$,
\begin{align*}
|\delta_X^Y(u)-\delta_X^{Y'}(u)|
&=
\left|
\alpha_u(D_Y)-\alpha_u(D_X)
-\alpha_u(D_{Y'})+\alpha_u(D_X)
\right|\\
&=
|\alpha_u(D_Y)-\alpha_u(D_{Y'})|\\
&\leq
L_\phi W_1(D_Y,D_{Y'}).
\end{align*}
The mean value theorem applied to $f$ gives
\begin{align*}
|w_X^Y(u)-w_X^{Y'}(u)|
&=
|f(\delta_X^Y(u))-f(\delta_X^{Y'}(u))|\\
&\leq
\frac{1}{\tau\sqrt e}
|\delta_X^Y(u)-\delta_X^{Y'}(u)|\\
&\leq
\frac{L_\phi}{\tau\sqrt e}W_1(D_Y,D_{Y'}).
\end{align*}
Therefore, for every $\xi\in\widehat D_X$,
\begin{align*}
|p_X^Y(\xi)-p_X^{Y'}(\xi)|
&=
p_X(\xi)
|w_X^Y(\pi_X(\xi))-w_X^{Y'}(\pi_X(\xi))|\\
&\leq
p_X(\xi)\frac{L_\phi}{\tau\sqrt e}W_1(D_Y,D_{Y'}).
\end{align*}
Let
$$
S
=
\sum\limits_{\xi\in\widehat D_X}
|p_X^Y(\xi)-p_X^{Y'}(\xi)|.
$$
Summing the preceding pointwise bound and using
$\sum\limits_{\xi\in\widehat D_X}p_X(\xi)=1$ gives
$$
S
\leq
\frac{L_\phi}{\tau\sqrt e}W_1(D_Y,D_{Y'}).
$$
The difference between the two boundary masses satisfies
\begin{align*}
|p_X^Y(\partial)-p_X^{Y'}(\partial)|
&=
\left|
\sum\limits_{\xi\in\widehat D_X}p_X(\xi)
\left(
w_X^{Y'}(\pi_X(\xi))-w_X^Y(\pi_X(\xi))
\right)
\right|\\
&\leq
\sum\limits_{\xi\in\widehat D_X}p_X(\xi)
|w_X^{Y'}(\pi_X(\xi))-w_X^Y(\pi_X(\xi))|\\
&=
S.
\end{align*}
For probability measures on a finite event space, total variation is one half
of their $\ell^1$ difference. Since both measures are defined on
$\Omega_X$, we obtain
\begin{align*}
d_{\mathrm{TV}}(p_X^Y,p_X^{Y'})
&=
\frac{1}{2}
\left(
S+|p_X^Y(\partial)-p_X^{Y'}(\partial)|
\right)\\
&\leq
S\\
&\leq
\frac{L_\phi}{\tau\sqrt e}W_1(D_Y,D_{Y'}).
\end{align*}
Thus both claims hold with, for example,
$$
C_X
=
\max\left\{
\frac{L_\phi^2}{2\tau^2},
\frac{L_\phi}{\tau\sqrt e}
\right\}.
$$
\subsection{Proof of Proposition~\ref{proposition:cross_entropy_kl_decomposition}}
\label{app:proof-cross-entropy-kl}
For every $\xi\in\widehat D_X$,
Definition~\ref{def:induced_probability_measure} and the positivity of
$p_X(\xi)$ give
\begin{align*}
\frac{p_X(\xi)}{p_X^Y(\xi)}
&=
\frac{p_X(\xi)}{w_X^Y(\pi_X(\xi))p_X(\xi)}\\
&=
\frac{1}{w_X^Y(\pi_X(\xi))}\\
&=
e^{\frac{\delta_X^Y(\pi_X(\xi))^2}{2\tau^2}}.
\end{align*}
Taking the logarithm gives
$$
\log\frac{p_X(\xi)}{p_X^Y(\xi)}
=
\frac{\delta_X^Y(\pi_X(\xi))^2}{2\tau^2}.
$$

The KL divergence on $\Omega_X$ is
$$
\operatorname{KL}(p_X\Vert p_X^Y)
=
\sum\limits_{\omega\in\Omega_X}
p_X(\omega)
\log\frac{p_X(\omega)}{p_X^Y(\omega)}.
$$
The boundary atom contributes zero because $p_X(\partial)=0$, using the
standard convention that $0\log(0/q)=0$. Thus
\begin{align*}
\operatorname{KL}(p_X\Vert p_X^Y)
&=
\sum\limits_{\xi\in\widehat D_X}
p_X(\xi)
\log\frac{p_X(\xi)}{p_X^Y(\xi)}\\
&=
\sum\limits_{\xi\in\widehat D_X}
p_X(\xi)
\frac{\delta_X^Y(\pi_X(\xi))^2}{2\tau^2}\\
&=
\frac{1}{2\tau^2}
\mathbb{E}_{\xi\sim p_X}
\left[\delta_X^Y(\pi_X(\xi))^2\right].
\end{align*}
On the other hand, expanding the logarithm of the ratio gives
\begin{align*}
\operatorname{KL}(p_X\Vert p_X^Y)
&=
\sum\limits_{\xi\in\widehat D_X}p_X(\xi)\log p_X(\xi)
-
\sum\limits_{\xi\in\widehat D_X}p_X(\xi)\log p_X^Y(\xi)\\
&=
-H(p_X)+H(p_X,p_X^Y)\\
&=
H(p_X,p_X^Y)-H(p_X).
\end{align*}
Combining the two calculations proves every equality in the proposition. The
final quantity is nonnegative because it is a positive constant times an
expectation of a square.
Suppressing the occurrence labels recovers the expectation over $u\sim p_X$
in Proposition~\ref{proposition:cross_entropy_kl_decomposition}.
\subsection{Proof of Theorem~\ref{thm:persistent-cross-entropy-wasserstein-bound}}
\label{app:proof-pce-wasserstein-bound}
Proposition~\ref{proposition:cross_entropy_kl_decomposition} gives
\begin{align*}
\Delta H_X^Y
&=
\frac{1}{2\tau^2}
\mathbb{E}_{\xi\sim p_X}
\left[\delta_X^Y(\pi_X(\xi))^2\right]\\
&=
\frac{1}{2\tau^2}
\sum\limits_{\xi\in\widehat D_X}
p_X(\xi)\delta_X^Y(\pi_X(\xi))^2.
\end{align*}
Every summand is nonnegative, so $\Delta H_X^Y\geq0$. Moreover,
$$
\delta_X^Y(\pi_X(\xi))^2
\leq
\Vert\delta_X^Y\Vert_\infty^2
$$
for every $\xi\in\widehat D_X$. Therefore,
\begin{align*}
\Delta H_X^Y
&\leq
\frac{1}{2\tau^2}
\sum\limits_{\xi\in\widehat D_X}
p_X(\xi)\Vert\delta_X^Y\Vert_\infty^2\\
&=
\frac{1}{2\tau^2}
\Vert\delta_X^Y\Vert_\infty^2
\sum\limits_{\xi\in\widehat D_X}p_X(\xi)\\
&=
\frac{1}{2\tau^2}\Vert\delta_X^Y\Vert_\infty^2.
\end{align*}
Lemma~\ref{lem:delta-wp-stability} now gives
\begin{align*}
\Delta H_X^Y
&\leq
\frac{1}{2\tau^2}
\left(L_\phi W_1(D_X,D_Y)\right)^2\\
&=
\frac{L_\phi^2}{2\tau^2}W_1(D_X,D_Y)^2.
\end{align*}
Thus the theorem holds with
$$
C=\frac{L_\phi^2}{2\tau^2}.
$$
The constants $L_\phi$ and $\tau$ are fixed independently of $D_X$ and
$D_Y$, so the same $C$ applies to every pair of diagrams.
\subsection{Proof of Theorem~\ref{thm:pce-joint-local-stability}}
\label{app:proof-pce-joint-local-stability}
For a finite persistence diagram $D$, define
$$
T(D)=\sum\limits_{z\in D}\ell_z,
\qquad
G(D)=\sum\limits_{z\in D}g(\ell_z).
$$
Set
$$
T_*=\min\{T(D_X),T(D_{X'})\},
$$
$$
B_*=\max\{
\Vert \delta_X^Y \Vert_\infty,
\Vert \delta_{X'}^{Y'} \Vert_\infty
\},
$$
and
$$
G_*=\max\{
G(D_X)+G(D_Y),
G(D_{X'})+G(D_{Y'})
\}.
$$
These quantities are finite, and $T_*>0$ because the diagrams are nonempty
and contain only off-diagonal points.

We first control the change in $\delta$ when its evaluation point also moves.
Definition~\ref{def:similarity_function} gives
$$
k(u,v)
=
h(\rho(u,v))
=
h(\rho(v,u))
=
k(v,u),
$$
so $k$ is symmetric. Its Lipschitz condition in the second argument therefore
also gives
$$
|k(u,v)-k(u',v)|
=
|k(v,u)-k(v,u')|
\leq
L_0\Vert u-u'\Vert.
$$
For any point $z\in D$, we consequently have
\begin{align*}
|\phi_u(z)-\phi_{u'}(z)|
&=
g(\ell_z)|k(u,z)-k(u',z)|\\
&\leq
L_0g(\ell_z)\Vert u-u'\Vert.
\end{align*}
Summing this inequality over $z\in D$ gives
\begin{align*}
|\alpha_u(D)-\alpha_{u'}(D)|
&=
\left|
\sum\limits_{z\in D}
\left(\phi_u(z)-\phi_{u'}(z)\right)
\right|\\
&\leq
\sum\limits_{z\in D}|\phi_u(z)-\phi_{u'}(z)|\\
&\leq
L_0G(D)\Vert u-u'\Vert.
\end{align*}

We use this pointwise bound together with
Lemma~\ref{lem:alpha-wp-stability}. For the $Y$ diagrams,
\begin{align*}
|\alpha_u(D_Y)-\alpha_{u'}(D_{Y'})|
&\leq
|\alpha_u(D_Y)-\alpha_u(D_{Y'})|
+|\alpha_u(D_{Y'})-\alpha_{u'}(D_{Y'})|\\
&\leq
L_\phi W_1(D_Y,D_{Y'})
+L_0G(D_{Y'})\Vert u-u'\Vert.
\end{align*}
The same argument gives
$$
|\alpha_u(D_X)-\alpha_{u'}(D_{X'})|
\leq
L_\phi W_1(D_X,D_{X'})
+L_0G(D_{X'})\Vert u-u'\Vert.
$$
Since $\delta_X^Y(u)=\alpha_u(D_Y)-\alpha_u(D_X)$, the triangle inequality
now yields
\begin{align*}
|\delta_X^Y(u)-\delta_{X'}^{Y'}(u')|
&\leq
L_\phi
\left(
W_1(D_X,D_{X'})+W_1(D_Y,D_{Y'})
\right)\\
&\quad+
L_0
\left(G(D_{X'})+G(D_{Y'})\right)
\Vert u-u'\Vert\\
&\leq
L_\phi
\left(
W_1(D_X,D_{X'})+W_1(D_Y,D_{Y'})
\right)
+L_0G_*\Vert u-u'\Vert.
\end{align*}

Take optimal matchings between $D_X$ and $D_{X'}$, and between $D_Y$ and
$D_{Y'}$, allowing points to be matched to the diagonal. Write the coordinate
pairs in the first matching as $(u_i,u_i')$, and assign persistence zero to a
diagonal point. Let
$$
R_X=W_1(D_X,D_{X'}),
\qquad
R_Y=W_1(D_Y,D_{Y'}),
$$
and set
$$
a_i=\frac{\ell_{u_i}}{T(D_X)},
\qquad
a_i'=\frac{\ell_{u_i'}}{T(D_{X'})}.
$$
Because persistence is the distance to the diagonal in the chosen
birth--persistence coordinates, the second-coordinate difference satisfies
$$
|\ell_{u_i}-\ell_{u_i'}|
\leq
\Vert u_i-u_i'\Vert
$$
for every matched pair, including a pair with one point on the diagonal.
Summing over the active matching pairs gives
$$
\sum\limits_i|\ell_{u_i}-\ell_{u_i'}|
\leq
\sum\limits_i\Vert u_i-u_i'\Vert
=
R_X.
$$
The total persistences satisfy
\begin{align*}
|T(D_X)-T(D_{X'})|
&=
\left|
\sum\limits_i\ell_{u_i}-\sum\limits_i\ell_{u_i'}
\right|\\
&\leq
\sum\limits_i|\ell_{u_i}-\ell_{u_i'}|\\
&\leq
R_X.
\end{align*}

We next bound the change in the normalized persistence weights. For each
pair,
\begin{align*}
|a_i-a_i'|
&=
\left|
\frac{\ell_{u_i}}{T(D_X)}
-
\frac{\ell_{u_i'}}{T(D_{X'})}
\right|\\
&\leq
\frac{|\ell_{u_i}-\ell_{u_i'}|}{T(D_X)}
+
\ell_{u_i'}
\left|
\frac{1}{T(D_X)}-\frac{1}{T(D_{X'})}
\right|.
\end{align*}
Summing over $i$ and using
$\sum\limits_i\ell_{u_i'}=T(D_{X'})$ gives
\begin{align*}
\sum\limits_i|a_i-a_i'|
&\leq
\frac{1}{T(D_X)}
\sum\limits_i|\ell_{u_i}-\ell_{u_i'}|
+
T(D_{X'})
\frac{|T(D_X)-T(D_{X'})|}{T(D_X)T(D_{X'})}\\
&=
\frac{1}{T(D_X)}
\sum\limits_i|\ell_{u_i}-\ell_{u_i'}|
+
\frac{|T(D_X)-T(D_{X'})|}{T(D_X)}\\
&\leq
\frac{2R_X}{T(D_X)}\\
&\leq
\frac{2R_X}{T_*}.
\end{align*}
In particular,
$$
\sum\limits_i a_i
=
\sum\limits_i a_i'
=1.
$$

If $u_i$ is off the diagonal, set
$$
r_i=\delta_X^Y(u_i)^2;
$$
if $u_i$ is on the diagonal, set $r_i=0$. Define $r_i'$ in the same way for
$u_i'$. These zero values do not change either expectation because the
corresponding normalized persistence weight is also zero. We then have
$$
0\leq r_i\leq B_*^2,
\qquad
0\leq r_i'\leq B_*^2.
$$

Consider an index $i$ for which both points are off the diagonal. Factoring
the difference of squares gives
\begin{align*}
|r_i-r_i'|
&=
\left|
\delta_X^Y(u_i)^2-\delta_{X'}^{Y'}(u_i')^2
\right|\\
&=
\left|
\delta_X^Y(u_i)-\delta_{X'}^{Y'}(u_i')
\right|
\left|
\delta_X^Y(u_i)+\delta_{X'}^{Y'}(u_i')
\right|\\
&\leq
2B_*
\left|
\delta_X^Y(u_i)-\delta_{X'}^{Y'}(u_i')
\right|\\
&\leq
2B_*L_\phi(R_X+R_Y)
+2B_*L_0G_*\Vert u_i-u_i'\Vert.
\end{align*}

To compare the weighted terms, let $c_i=\min\{a_i,a_i'\}$. Then
\begin{align*}
a_ir_i-a_i'r_i'
&=
(a_i-c_i)r_i
-(a_i'-c_i)r_i'
+c_i(r_i-r_i').
\end{align*}
At most one of $a_i-c_i$ and $a_i'-c_i$ is nonzero. Consequently,
\begin{align*}
|a_ir_i-a_i'r_i'|
&\leq
B_*^2
\left(
|a_i-c_i|+|a_i'-c_i|
\right)
+c_i|r_i-r_i'|\\
&=
B_*^2|a_i-a_i'|+c_i|r_i-r_i'|.
\end{align*}
If one point is on the diagonal, then $c_i=0$, so the same inequality holds
without requiring a comparison between two $r$ values.

Summing over all active matching pairs gives
\begin{align*}
\left|
\sum\limits_i a_ir_i-
\sum\limits_i a_i'r_i'
\right|
&\leq
\sum\limits_i|a_ir_i-a_i'r_i'|\\
&\leq
B_*^2\sum\limits_i|a_i-a_i'|
+
\sum\limits_i c_i|r_i-r_i'|.
\end{align*}
Since $0\leq c_i\leq a_i$ and $\sum\limits_i a_i=1$,
$$
\sum\limits_i c_i\leq1.
$$
Also, $c_i\leq1$ for every $i$, so
$$
\sum\limits_i c_i\Vert u_i-u_i'\Vert
\leq
\sum\limits_i\Vert u_i-u_i'\Vert
=
R_X.
$$
Using these two bounds, the estimate for $|r_i-r_i'|$, and the normalization
bound above, we obtain the following. The $|r_i-r_i'|$ estimate is needed
only when both points are off the diagonal; every other term has $c_i=0$.
\begin{align*}
\left|
\sum\limits_i a_ir_i-
\sum\limits_i a_i'r_i'
\right|
&\leq
B_*^2\frac{2R_X}{T_*}
+2B_*L_\phi(R_X+R_Y)
+2B_*L_0G_*R_X\\
&=
\left(
\frac{2B_*^2}{T_*}
+2B_*(L_\phi+L_0G_*)
\right)R_X
+2B_*L_\phi R_Y.
\end{align*}
The matching includes each off-diagonal occurrence exactly once. Therefore,
\begin{align*}
\sum\limits_i a_ir_i
&=
\mathbb{E}_{u\sim p_X}\left[\delta_X^Y(u)^2\right],\\
\sum\limits_i a_i'r_i'
&=
\mathbb{E}_{u'\sim p_{X'}}\left[\delta_{X'}^{Y'}(u')^2\right].
\end{align*}
Proposition~\ref{proposition:cross_entropy_kl_decomposition} consequently
gives
\begin{align*}
|\Delta H_X^Y-\Delta H_{X'}^{Y'}|
&=
\frac{1}{2\tau^2}
\left|
\sum\limits_i a_ir_i-
\sum\limits_i a_i'r_i'
\right|\\
&\leq
C_XR_X+C_YR_Y,
\end{align*}
where
$$
C_X
=
\frac{1}{\tau^2}
\left(
\frac{B_*^2}{T_*}+B_*(L_\phi+L_0G_*)
\right),
\qquad
C_Y=\frac{B_*L_\phi}{\tau^2}.
$$
To make the constant uniform on a neighborhood of $(D_X,D_Y)$, first note
that $g(0)=0$ allows us to extend $g(\ell)$ by zero to diagonal points. For
any optimal matching $\gamma\in\Gamma(D,D')$, the Lipschitz condition on
$g$ gives
\begin{align*}
|G(D)-G(D')|
&=
\left|
\sum\limits_{z\in D\cup\Delta}
\left(g(\ell_z)-g(\ell_{\gamma(z)})\right)
\right|\\
&\leq
\sum\limits_{z\in D\cup\Delta}
|g(\ell_z)-g(\ell_{\gamma(z)})|\\
&\leq
L_g
\sum\limits_{z\in D\cup\Delta}
|\ell_z-\ell_{\gamma(z)}|\\
&\leq
L_g
\sum\limits_{z\in D\cup\Delta}
\Vert z-\gamma(z)\Vert\\
&=
L_gW_1(D,D').
\end{align*}
Choose
$$
r_{X,Y}=\frac{T(D_X)}{2}
$$
and suppose that $R_X+R_Y\leq r_{X,Y}$. Then
$$
R_X
\leq
R_X+R_Y
\leq
\frac{T(D_X)}{2}.
$$
Since $|T(D_X)-T(D_{X'})|\leq R_X$, it follows that
\begin{align*}
T(D_{X'})
&\geq
T(D_X)-R_X\\
&\geq
\frac{T(D_X)}{2}.
\end{align*}
The same lower bound is immediate for $T(D_X)$ itself. Hence
$$
T_*
=
\min\{T(D_X),T(D_{X'})\}
\geq
\frac{T(D_X)}{2}.
$$

We next bound $G_*$. The Lipschitz estimate for $G$ gives
\begin{align*}
G(D_{X'})+G(D_{Y'})
&\leq
G(D_X)+L_gR_X+G(D_Y)+L_gR_Y\\
&=
G(D_X)+G(D_Y)+L_g(R_X+R_Y)\\
&\leq
G(D_X)+G(D_Y)+L_gr_{X,Y}.
\end{align*}
Define
$$
\overline G_{X,Y}
=
G(D_X)+G(D_Y)+L_gr_{X,Y}.
$$
Since $\overline G_{X,Y}$ also bounds the unprimed sum
$G(D_X)+G(D_Y)$, we have
$$
G_*
\leq
\overline G_{X,Y}.
$$

Finally, $0<k(u,z)\leq1$ implies
\begin{align*}
0
\leq
\alpha_u(D)
&=
\sum\limits_{z\in D}k(u,z)g(\ell_z)\\
&\leq
\sum\limits_{z\in D}g(\ell_z)\\
&=
G(D).
\end{align*}
Therefore,
\begin{align*}
|\delta_X^Y(u)|
&=
|\alpha_u(D_Y)-\alpha_u(D_X)|\\
&\leq
G(D_X)+G(D_Y),
\end{align*}
and the same argument applies to $\delta_{X'}^{Y'}$. Consequently,
$$
B_*
\leq
G_*
\leq
\overline G_{X,Y}.
$$
Substituting the bounds for $T_*$, $B_*$, and $G_*$ into $C_X$ gives
\begin{align*}
C_X
&\leq
\frac{1}{\tau^2}
\left(
\frac{\overline G_{X,Y}^2}{T(D_X)/2}
+\overline G_{X,Y}
\left(L_\phi+L_0\overline G_{X,Y}\right)
\right)\\
&=
\frac{1}{\tau^2}
\left(
\frac{2\overline G_{X,Y}^2}{T(D_X)}
+\overline G_{X,Y}
\left(L_\phi+L_0\overline G_{X,Y}\right)
\right).
\end{align*}
Similarly,
$$
C_Y
\leq
\frac{\overline G_{X,Y}L_\phi}{\tau^2}.
$$
Both quantities are therefore bounded by the single finite constant
$$
C_{X,Y}
=
\frac{1}{\tau^2}
\left(
\frac{2\overline G_{X,Y}^2}{T(D_X)}
+\overline G_{X,Y}
\left(L_\phi+L_0\overline G_{X,Y}\right)
\right).
$$
The constant $C_{X,Y}$ depends only on the fixed center diagrams
$D_X,D_Y$ and on the fixed functions and scales. It does not depend on
$D_{X'}$ or $D_{Y'}$ as long as
$R_X+R_Y\leq r_{X,Y}$. Hence
\begin{align*}
|\Delta H_X^Y-\Delta H_{X'}^{Y'}|
&\leq
C_XR_X+C_YR_Y\\
&\leq
C_{X,Y}(R_X+R_Y)\\
&=
C_{X,Y}
\left(
W_1(D_X,D_{X'})+W_1(D_Y,D_{Y'})
\right),
\end{align*}
which proves the joint local stability claim.
If $D_{X'}=D_X$, then $R_X=0$, and the same estimate reduces to
$$
|\Delta H_X^Y-\Delta H_X^{Y'}|
\leq
C_{X,Y}W_1(D_Y,D_{Y'}),
$$
which proves the final claim.
\section{Additional Experimental Details}
\label{app:additional-experiments}

For every $u\in D_X$, the numerical implementation evaluates
$$
\alpha_u(D_Y)
=
\sum\limits_{v\in D_Y}k(u,v)g(\ell_v),
\qquad
\delta_X^Y(u)
=
\alpha_u(D_Y)-\alpha_u(D_X),
$$
then sets $p_X^Y(u)=e^{-\frac{\delta_X^Y(u)^2}{2\tau^2}}p_X(u)$ and assigns the
remaining mass to $\partial$. No further normalization is required. Every
experiment below uses
$$
g(t)=\frac{t}{1+t},
$$
which satisfies the regularity assumptions in
Appendix~\ref{app:g-regularity}. The reference probability itself remains the
standard persistence probability
$p_X(u)=\ell_u/\sum\limits_{v\in D_X}\ell_v$.

\subsection{Two-loop configuration}

The point clouds in Figure~\ref{fig:two-loops-equal-pe} are generated with
random seed $7$. Their $H_1$ diagrams are computed from Vietoris--Rips
filtrations, and features with persistence below $10^{-3}$ are discarded.
The Gaussian similarity is evaluated in birth--persistence coordinates with
$\sigma_b=\sigma_\ell=0.50$. A single response scale is used for all four
comparisons:
$$
\tau=0.298499.
$$
It is the median of the positive values
$|\delta_X^{Y_i}(u)|$, pooled over every $u\in D_X$ and
$i\in\{1,2,3,4\}$. It is calibrated once and is not adjusted separately for
individual pairs.

Table~\ref{tab:two-loop-details} lists the sizes of the point clouds and
diagrams together with the persistent entropy values before rounding.

\begin{table}[h]
\centering
\caption{Two-loop data and persistence-diagram summary.}
\label{tab:two-loop-details}
\begin{tabular}{lrrr}
\toprule
Data & Point-cloud size & Number of $H_1$ points & Persistent entropy \\
\midrule
$X$   & 239 & 12 & 1.500261 \\
$Y_1$ & 758 & 64 & 1.500305 \\
$Y_2$ & 695 & 70 & 1.500447 \\
$Y_3$ & 605 & 59 & 1.500307 \\
$Y_4$ &  84 & 13 & 1.500474 \\
\bottomrule
\end{tabular}
\end{table}

The augmented probabilities satisfy the numerical identities expected from
the construction: the mass on $D_X$ plus the mass at $\partial$ equals
one, total variation equals the boundary mass, and entropy excess equals
$\frac{1}{2\tau^2}\mathbb{E}_{u\sim p_X}[\delta_X^Y(u)^2]$, up to floating-point
precision.

\subsection{Spring--mass configuration}
\label{app:spring-mass-configuration}

We use double-precision arithmetic and integrate the spring--mass equations by
fixed-step fourth-order Runge--Kutta. The time step is $0.01$, the initial
state is $(x,\dot{x},y,\dot{y})=(1,0,0,1)$, and the solver performs
$10{,}000$ updates. We retain every tenth post-update state, giving
$1{,}000$ observations at an effective sampling interval of $0.1$. We
discard no transient and add no observation noise. The coupling parameters
range over the $9\times9$ grid
$\alpha,\beta\in\{0,0.1,\ldots,0.8\}$. Here $\alpha$ controls
$B\to A$ and $\beta$ controls $A\to B$, giving one independent case,
eight cases in each one-way direction, and $64$ bidirectional cases.

Write the sampled observations as $x_t$ and $y_t$, and let $q$ denote
the delay lag, to distinguish it from the PCE response scale $\tau$. We use
the raw observations without centering or rescaling. The individual
reconstructions are
$$
X_A
=
\left\{
(x_t,x_{t+q},\ldots,x_{t+(E-1)q})
\right\}_t,
\qquad
X_B
=
\left\{
(y_t,y_{t+q},\ldots,y_{t+(E-1)q})
\right\}_t.
$$
For even $E$, the multivariate reconstruction of
\citet{cao1998dynamics} combines half of the coordinates from each series:
$$
X_{AB}
=
\left\{
\bigl(
x_t,\ldots,x_{t+(E/2-1)q},
y_t,\ldots,y_{t+(E/2-1)q}
\bigr)
\right\}_t.
$$
We set $E=4$ and $q=1$, where one lag is one sampled time step, or $0.1$
in physical time. Thus the individual points have four delay coordinates and
the joint points are $(x_t,x_{t+1},y_t,y_{t+1})$. The individual
reconstructions contain $997$ valid points and the joint reconstruction
initially contains $999$. We retain the first $997$ common time indices in
all three clouds; no additional point subsampling is performed.

We construct alpha complexes in GUDHI 3.11.0~\citep{maria2014gudhi} and
compute their finite $H_1$ persistence intervals. GUDHI returns squared
alpha-filtration radii, so we take their square roots before forming the
diagrams. We then convert
each birth--death point to $z=(b,\ell)$, where $\ell=d-b$, and divide both
coordinates in each diagram by that diagram's maximum persistence. Thus
$\max_{z\in D}\ell_z=1$ separately for $D_A,D_B$, and $D_{AB}$. No
minimum-persistence threshold is applied. These choices follow the procedure
reported by \citet{DBLP:journals/jsiaml/BandoKY22}. Because their original
implementation is not publicly available and some low-level numerical
choices are not specified, we follow the published setup but do not claim an
exact numerical reproduction of their Table~1.

PCE is computed directly from $D_A$ and $D_B$; it does not use
$D_{AB}$. We use $g(t)=\frac{t}{1+t}$ and the anisotropic Gaussian
similarity
$$
\kappa(u,v)
=
e^{-\frac{1}{2}
\left[
\left(\frac{b_u-b_v}{\sigma_b}\right)^2
+
\left(\frac{\ell_u-\ell_v}{\sigma_\ell}\right)^2
\right]},
$$
with
$$
\sigma_b=0.40,
\qquad
\sigma_\ell=0.15,
\qquad
\tau=2.001779.
$$
We first compute all local differences, pool
$\lvert\delta_A^B\rvert$ and $\lvert\delta_B^A\rvert$ over the
$81$ coupling settings, and set $\tau$ to their global median. The scale
is therefore selected once for the complete experiment and is never
recalibrated for an individual diagram pair. The entropy excess and
unexplained mass are then computed from the same differences using the
augmented probability in Definition~\ref{def:induced_probability_measure}.

Every symmetric baseline compares $D_{AB}$ separately with $D_A$ and
$D_B$. We compute the exact bottleneck distance and the $2$-Wasserstein
distance in GUDHI, using the $\ell_\infty$ ground metric and allowing
matching to the diagonal. Betti curves are sampled at $256$ equally spaced
points over $[0,1.8]$. Persistence landscapes use their first five layers
on the same grid. Persistence silhouettes use uniform weighting and are
sampled at $100$ points over $[0,1.8]$, following the default GUDHI
settings for weighting and resolution. For these three functional
representations, we report the Euclidean difference between sampled vectors
divided by the square root of the vector length.

Persistence images are formed in birth--persistence coordinates on a
$32\times32$ grid over $[0,1.4]^2$. Each point contributes an isotropic
Gaussian with width $0.08$, weighted by its persistence; image differences
use the same length-normalized Euclidean distance. For the persistence
scale-space kernel, we use
$$
k_\sigma(D_1,D_2)
=
\frac{1}{8\pi\sigma}
\sum\limits_{u\in D_1}
\sum\limits_{v\in D_2}
\left(
e^{-\frac{\Vert u-v \Vert_2^2}{8\sigma}}
-
e^{-\frac{\Vert u-\bar v \Vert_2^2}{8\sigma}}
\right),
$$
where $u,v,\bar v$ are in birth--death coordinates and $\bar v$ is the
reflection of $v$ across the diagonal. The corresponding distance is
$\bigl(k_\sigma(D_1,D_1)+k_\sigma(D_2,D_2)
-2k_\sigma(D_1,D_2)\bigr)^{1/2}$. We pool all finite diagram points from
the $81$ coupling settings and set
$\sigma=\frac{1}{8}\operatorname{median}_{u\ne v}\Vert u-v \Vert_2^2
=0.016325$, using at most $250{,}000$ pairs with seed $7$.

In Figure~\ref{fig:spring-mass}, the horizontal coordinate is the comparison
with $D_A$ and the vertical coordinate is the comparison with $D_B$ for
every symmetric baseline. The PCE panels instead plot the two directed
quantities computed from $D_A$ and $D_B$, as stated in the figure caption.
All parameters above are fixed over the complete $9\times9$ grid.

\subsection{Knowledge-distillation configuration}
\label{app:knowledge-distillation-configuration}

The knowledge-distillation experiment uses CIFAR-100 with $50{,}000$
training images and $10{,}000$ test images. We use the pretrained ResNet56
teacher and the ResNet20 student architecture from the TopKD implementation
\citep{kim2024topkd}. All three reported runs start from the same student
weights and use batch size $64$, seed $7$, and $240$ epochs of SGD with
initial learning rate $0.05$, momentum $0.9$, and weight decay
$5\times10^{-4}$. The learning rate is multiplied by $0.1$ after epochs
$150$, $180$, and $210$. Standard random cropping with four-pixel padding
and random horizontal flipping are used during training. The KD temperature is
$4$, and its coefficient is $2$. TopKD uses topology coefficient $5$, so
its complete objective is
$\mathcal L_{\mathrm{CE}}+2\mathcal L_{\mathrm{KD}}
+5\mathcal L_{\mathrm{Top}}$.

For a mini-batch of $64$ images, let $F_T$ and $F_S$ denote the final
teacher and student feature clouds. Each feature vector is $\ell_2$-normalized.
We compute exact finite $H_0$ Vietoris--Rips persistence by a minimum
spanning tree. Its $63$ edge lengths are the finite persistence lifetimes;
all births are zero. The teacher features and $D_T$ are detached, while the
student diagram remains differentiable. The teacher persistence probability
is
$$
p_T(t_i)
=
\frac{t_i}{\sum\limits_j t_j}.
$$
For lifetimes $t$ and $s$, we use
$$
\kappa(t,s)
=
e^{-\frac{(t-s)^2}{2\eta^2}},
\qquad
g(t)=\frac{t}{1+t}.
$$
For each of the first eight training mini-batches, we compute the median
nonzero pairwise gap between teacher lifetimes. Their median defines
$\eta$. After fixing $\eta$, $\tau$ is the median positive value of
$\lvert\delta_T^S\rvert$ pooled over the same diagram pairs. Calibration is
performed once before training and gives
$$
\eta=0.074140072,
\qquad
\tau=16.062859.
$$

For EM-PCE, define the total explained mass on the teacher diagram by
$$
r_T^S
=
\sum\limits_{u\in D_T}p_T(u)w_T^S(u)
=
1-p_T^S(\partial).
$$
EM-PCE removes the unexplained event and normalizes the explained masses on
$D_T$:
$$
p_T^{S,\mathrm{EM}}(u)
=
\frac{p_T(u)w_T^S(u)}{r_T^S},
\qquad
u\in D_T.
$$
The corresponding entropy excess is
\begin{align*}
\Delta H_T^{S,\mathrm{EM}}
&=
H(p_T,p_T^{S,\mathrm{EM}})-H(p_T)\\
&=
\frac{1}{2\tau^2}
\mathbb{E}_{u\sim p_T}
\left[\delta_T^S(u)^2\right]
+\log r_T^S.
\end{align*}
This normalization compares the relative allocation of probability among the
explained teacher atoms. It is used only as an experimental comparison and is
not the augmented probability in Definition~\ref{def:induced_probability_measure}.
The EM-PCE run uses the fixed coefficient $5$ in its training objective.

At every training mini-batch, we compute the gradients of
$5\mathcal L_{\mathrm{Top}}$ and $\Delta H_T^S$ with respect to $F_S$.
The batch-wise coefficient is
$$
c_{\mathrm{batch}}
=
\frac{
\Vert\nabla_{F_S}\left(5\mathcal L_{\mathrm{Top}}\right)\Vert_2
}{
\Vert\nabla_{F_S}\Delta H_T^S\Vert_2
}.
$$
In the implementation, both norms are bounded below by $10^{-12}$ for
numerical stability. We detach $c_{\mathrm{batch}}$ before differentiating the
training loss.
Consequently,
$$
\Vert
\nabla_{F_S}\left(c_{\mathrm{batch}}\Delta H_T^S\right)
\Vert_2
=
\Vert
\nabla_{F_S}\left(5\mathcal L_{\mathrm{Top}}\right)
\Vert_2
$$
up to numerical precision, without changing the PCE gradient direction. The
TopKD topology term is removed from the PCE training objective, but its frozen
RipsNet remains in the computation only to provide this reference norm.

Before official test evaluation, the PCE configuration was selected on a
deterministic $45{,}000/5{,}000$ split of the CIFAR-100 training data; the
official test set was not loaded in this selection run. On this split, TopKD
and PCE reached best validation accuracies of $69.62\%$ and $70.64\%$,
respectively, with last-$10$-epoch means of $69.318\%$ and $70.336\%$.
The configuration was then fixed for the full $50{,}000$-image training run.
The reported TopKD, PCE, and EM-PCE runs share the student initialization,
data order, augmentation, optimizer, learning-rate schedule, CE loss, and KD
loss.

\end{document}